\newtheorem{theorem}{Theorem}
\newtheorem{lemma}{Lemma}
\newtheorem{corollary}{Corollary}
\newtheorem{assumption}{Assumption}
\newtheorem{remark}{Remark}
\begin{document}

\title{Communication Efficient Federated Learning with Linear Convergence on Heterogeneous Data}
\author{Jie Liu and Yongqiang Wang, ~\IEEEmembership{Senior Member,~IEEE} \thanks{The work was supported in part by the National Science Foundation under Grants CCF-2106293, CCF-2215088, CNS-2219487, CCF-2334449, and CNS-2422312 (Corresponding author: Yongqiang Wang).} \thanks{The authors are with the Department of Electrical and Computer Engineering, Clemson University, Clemson, SC 29634, USA (e-mail: jie9@clemson.edu; yongqiw@clemson.edu).} }

\markboth{ }%
{Shell \MakeLowercase{\textit{et al.}}: Bare Demo of IEEEtran.cls for IEEE Journals}

\maketitle

\begin{abstract}
By letting local clients perform multiple local updates before communicating with a parameter server, modern federated learning algorithms such as FedAvg tackle the communication bottleneck problem in distributed learning and have found many successful applications. However, this asynchrony between local updates and communication also leads to a ``client-drift'' problem when the data is heterogeneous (not independent and identically distributed), resulting in errors in the final learning result. In this paper, we propose a federated learning algorithm, which is called FedCET, to ensure accurate convergence even under heterogeneous distributions of data across clients. Inspired by the distributed optimization algorithm NIDS, we use learning rates to weight information received from local clients to eliminate the ``client-drift''. We prove that under appropriate learning rates, FedCET can ensure linear convergence to the exact solution. Different from existing algorithms which have to share both gradients and a drift-correction term to ensure accurate convergence under heterogeneous data distributions, FedCET only shares one variable, which significantly reduces communication overhead.  Numerical comparison with existing counterpart algorithms confirms the effectiveness of FedCET.
\end{abstract}

\begin{IEEEkeywords}
Federated Learning, Heterogeneous Data Distributions, Linear Convergence
\end{IEEEkeywords}

\IEEEpeerreviewmaketitle

\section{Introduction}

Federated learning has been a hot research topic due to its various advantages over centralized learning in data security, privacy preservation, and communication efficiency \cite{Li_Challenge,FL_Survey2,Xuanyu_Cao1}.  Since first proposed in \cite{pmlr-v54-mcmahan17a}, federated learning has been widely applied in many fields, ranging from neural networks \cite{neural_network1,neural_network2,neural_network3}, wireless networks \cite{wireless_network1,wireless_network2,wireless_network3}, healthcare \cite{healthcare1,healthcare2,healthcare3}, to the Internet of things \cite{Internet_of_Things,Internet_of_Things2,Internet_of_Things3}. Unlike centralized machine learning where all data are aggregated to a data center, federated learning allows training data to stay on individual clients \cite{pmlr-v54-mcmahan17a}. Each client performs multiple local training steps based on its local dataset before sharing information with a parameter server, which then aggregates these information and distributes the results to clients to ensure that all clients can learn the global model. In recent decades, various aspects of federated learning have been intensively studied, including communication efficiency \cite{FedPAQ,A_Mitra_Linear,pmlr-v130-haddadpour21a}, learning rate design \cite{Locally_Mukherjee,Kim_Adpative,Pan_Wang_Li_Wang_Tang_Zhao_2023}, and privacy/security protection \cite{data_security,data_security2,data_security3}.

In the canonical federated learning architecture (see FedAvg in \cite{pmlr-v54-mcmahan17a}), to reduce the communication overhead, individual clients perform multiple local updates based on local data before communicating with the parameter server. However, this leads to ``client-drift'' when the data distribution is heterogeneous across the clients, i.e., the final results will deviate from the optimal solution (see \cite{pmlr-v119-karimireddy20a,pmlr-v119-malinovskiy20a,pmlr-v130-charles21a,NEURIPS2020_4ebd440d}). More specifically, when the data distributions are heterogeneous across the clients (not independent and identically distributed), the expected loss functions of individual clients are different from the global loss function, which makes the optimal solution of the global loss function differ from that of each client's local loss function. After multiple local training steps, local model parameters move toward the optimal solutions of their local loss functions, leading to a drift from the optimal solution of the global loss function. As a result, popular federated learning algorithms, such as FedAvg, cannot converge to the exact optimal solution of the global loss function. It is worth noting that diminishing learning rates can be used to mitigate this optimization error. However, using decreasing learning rates will result in slow convergence.

Methods to obviate the ``client-drift'' in the heterogeneous data distributed case have been reported in recent years.  In \cite{pmlr-v119-karimireddy20a}, SCAFFOLD is proposed to ensure exact convergence in the heterogeneous-distribution case by transmitting an additional control variable to correct the ``client-drift'' in local updates.  In \cite{A_Mitra_Aggregated}, FedTrack is proposed which shares both local model parameters and local gradient information between the parameter server and the clients to eliminate the ``client-drift'' by incrementally aggregating the gradients of each client's loss function. In \cite{A_Mitra_Linear}, FedLin with gradient sparsification is proposed to reduce the communication overhead in FedTrack. However, it transmits both model parameters and (compressed) gradient information to guarantee exact convergence.

All existing algorithms in \cite{A_Mitra_Linear,pmlr-v119-karimireddy20a,A_Mitra_Aggregated} have to share one additional variable besides the gradient variable between clients and the parameter server, which doubles the communication overhead compared with the classic federated learning algorithm FedAvg, where only the gradient variable has to be shared.  In this paper, inspired by a well-known distributed optimization algorithm NIDS \cite{Ming_Yan1}, we propose an algorithm that can tackle the ``client-drift'' problem by only sharing one variable between local clients and the parameter server. Specifically, we use the learning rate to weight information received from individual clients, which is proven to be effective to address the ``client-drift'' problem. It is worth noting that \cite{Ming_Yan1} only allows one local update in each communication round, and hence, we have to significantly revise the algorithm and proof techniques therein to allow multiple local updates in each round which is crucial for federated learning. 

Under the $\mu$-strongly-convex and $L$-smooth assumptions for each client's local loss function, we prove that when the learning rate satisfies certain conditions, the proposed algorithm can guarantee linear convergence to the exact optimal solution even when the data distribution is heterogeneous across the clients. Compared with existing results for federated learning under heterogeneous data in \cite{A_Mitra_Linear,pmlr-v119-karimireddy20a,A_Mitra_Aggregated}, which have to share both the gradient and an additional control variable between the parameter server and local clients in each communication round, our algorithm only needs to share one variable in each communication round, and hence, has a much higher communication efficiency. Numerical evaluations show that even with the reduced communication, our algorithm can still achieve faster convergence than existing counterpart algorithms.

\section{Preliminaries}
\subsection{Notations}

$\mathbf{0}_{n\times p}\in\mathbb{R}^{n\times p}$ denotes the matrices (vectors for $p=1$) with all elements being $0$. $\mathbf{1}_N\in\mathbb{R}^N$ denotes the $N$-dimensional vector with all element being $1$. $\mathbf{I}_{N}\in\mathbb{R}^{N\times N}$ is the identical matrix. For the convenience of expression, we use $\mathbf{I}$  to represent $\mathbf{I}_{N}$ when the dimension is clear from the context. For two matrices $A,B\in\mathbb{R}^{n\times p}$, we define their inner product as $\langle A,B\rangle={\bf tr}(A^{\bf T}B)$ and the Frobenius norm as $\Vert A\Vert=\sqrt{\langle A,A\rangle}$. In addition, for a symmetric matrix $Q\in\mathbb{R}^{n\times n}$, we define $\langle A,B\rangle_{Q}={\bf tr}(A^{\bf T}QB)$ and $\Vert A\Vert_{Q}=\sqrt{\langle A,A\rangle_{Q}}$. For two symmetric matrices $A,B\in\mathbb{R}^{n\times n}$, we use $A\succ B$ ($A\succeq B$) to denote $A-B$ is positive definite (or positive semidefinite). $A^{\dagger}$ denotes the pseudo inverse of $A$. Given two positive integers $a$ and $b$, $a\ \textbf{mod}\ b$ represents the remainder of the division of $a$ by $b$.

\subsection{Problem Settings}

We consider the following federated learning problem over a client set $\mathcal{S}=\{1,2,\cdots,N\}$ as follows:
\begin{align}\label{FL_problem_cdc}
\min_{x\in\mathbb{R}^n}f(x)=\frac{1}{N}\sum^{N}_{i=1}f_i(x),
\end{align}
where $f_i:\mathbb{R}^n\rightarrow\mathbb{R}$ is the local loss function of client $i\in\mathcal{S}$. The loss function $f_i(x)$ is solely dependent on the local training data of client $i\in\mathcal{S}$ and is only accessible by client $i$. We assume that the loss function $f_i(x)$ of client $i\in\mathcal{S}$ satisfies the following assumptions:

\begin{assumption}\label{smooth_assumption}
The loss function $f_i(x)$ of the client $i\in\mathcal{S}$ is $L$-smooth over $\mathbb{R}^n$, that is, there exists a constant $L>0$ such that
\begin{align*}
\Vert \nabla f_i(x)-\nabla f_i(y)\Vert&\leq L \Vert x-y\Vert,
\end{align*}
holds for any $x,y\in\mathbb{R}^n$.
\end{assumption}

\begin{assumption}\label{strong_convex_assumption}
The loss function $f_i(x)$ of the client $i\in\mathcal{S}$ is $\mu$-strongly-convex over $\mathbb{R}^n$, that is, there exists a constant $\mu>0$ such that
\begin{align*}
\langle\nabla f_i(x)-\nabla f_i(y),x-y\rangle&\geq \mu\Vert x-y\Vert^2,
\end{align*}
holds for any $x,y\in\mathbb{R}^n$.
\end{assumption}

From Assumption \ref{smooth_assumption} and Assumption \ref{strong_convex_assumption}, we know that the optimal solution $x^*=\arg\min_{x\in\mathbb{R}^n}f(x)$ of the global loss function $f(x)$ exists and is unique. Moreover, the optimal solution $x^*$ satisfies 
\begin{align*}
\nabla f({x}^*)=\mathbf{0}_n.
\end{align*}
Each client performs local updates using its local dataset and shares information with the parameter server to ensure that all clients can converge to the global optimal solution $x^*$.

\section{Main Results}
In this section, we propose the federated learning algorithm to solve (\ref{FL_problem_cdc}) and establish its linear convergence to the exact optimal solution. The proposed federated learning algorithm, which we name FedCET, has the advantage of a lower communication overhead over the federated learning algorithms in existing works \cite{A_Mitra_Linear,pmlr-v119-karimireddy20a,A_Mitra_Aggregated}, which will be explained later.

\subsection{Algorithm Description}
Some parameters and initial values should be explained before introducing FedCET (see Algorithm \ref{new_algorithm_new_condition}). The number of local training steps, the weight parameter, and the learning rate are denoted as $\tau$, $c$, and $\alpha$, respectively. The local training step $\tau$ is a positive integer ($\tau\in\mathbb{Z}^{+})$. The learning rate $\alpha$ is obtained by running Algorithm \ref{step_size_search}. 
\begin{algorithm}
   \caption{Learning Rate Search}
   \label{step_size_search}
\begin{algorithmic}
\STATE{\textbf{Initialization}: the search stepsize $h$, the initial learning rate of $\alpha$ as $\alpha_0<\min\{\frac{1}{2\tau L},\frac{\mu^2}{2\tau(1+\frac{2}{\tau})^{2\tau-2}L^3}$, $\frac{\mu}{5\tau(1+\frac{2}{\tau})^{2\tau-2} L^2}\}$.}
   \STATE{\textbf{while} {$\Big(1-\tau\mu\alpha+\tau  L^2(\tau\alpha-\frac{2}{ \mu })(1+\frac{2}{\tau})^{2\tau-2}\alpha>0$ \& $(1-\tau L\alpha)\tau\mu\alpha+\tau^3 L^4(\tau\alpha-\frac{2}{ \mu })(1+\frac{2}{\tau})^{2\tau-2}\alpha^3>0\Big)$} }
  \begin{align*}
\alpha=\alpha+h.
\end{align*}
   \textbf{end while}
\STATE{\textbf{Output} the learning rate $\alpha=\alpha-h$.}
\end{algorithmic}
\end{algorithm}
In FedCET, the model parameter of client $i\in\mathcal{S}$ is denoted as $x_i(t)$. We assume that at the iteration $t$, the client $i$ updates its model parameter $x_i(t+1)$ based on its loss function $f_i(x)$, its previous model parameter, and received information from the parameter server (if any). For convenience of expression, we assume that, at time $t=-1$, each client $i\in\mathcal{S}$ shares information 
\begin{align*}
2x_i(-1)-x_i(-2)-\alpha\nabla f_i(x_i(-1))+\alpha \nabla f_i(x_i(-2))
\end{align*}
with the parameter server, where
\begin{align*}
 x_i(-1)=x_i(-2)-\alpha\nabla f_i(x_i(-2))
\end{align*}
and $x_i(-2)\in\mathbb{R}^{n}$ are the initial values of client $i\in\mathcal{S}$. Then, each client $i\in\mathcal{S}$ updates $x_i(0)$ as follows
\begin{align*}
       x_{i}&(0)=\frac{c\alpha}{N}\sum^{N}_{j=1}\Big\{ 2x_j(-1)-x_j(-2)-\alpha \nabla f_j(x_j(-1))\nonumber\\
       &+\alpha \nabla f_j(x_j(-2))\Big\}+(1-c\alpha)\Big\{2x_i(-1)-x_i(-2)\nonumber\\
       &-\alpha \nabla f_i(x_i(-1))+\alpha \nabla f_i(x_i(-2))\Big\}.
   \end{align*}
After setting initial values $x_i(-1)$ and $x_i(0)$ ($i\in\mathcal{S}$), we present the detailed steps of FedCET in Algorithm \ref{new_algorithm_new_condition}.

\begin{algorithm}
   \caption{FedCET }
   \label{new_algorithm_new_condition}
\begin{algorithmic}
\STATE{\textbf{Initialization}:  local training period $\tau\in\mathbb{Z}^{+}$, the learning rate $\alpha$ obtained from Algorithm \ref{step_size_search}, and the weight parameter $0<c\leq\frac{\mu}{2\mu\alpha+{8}}$.}
   \FOR{$t=0$ {\bfseries to} $T$}
   \FOR{each client $i=1,2,\cdots,N$ in parallel}
   
   \IF{$t+1\ \textbf{mod}\ \tau =0$}\STATE The parameter server receives 
   \begin{align*}
    2x_j(t)&-x_j(t-1)-\alpha \nabla f_j(x_j(t))\\
    &+\alpha \nabla f_j(x_j(t-1))
   \end{align*}
   from client $j\in\mathcal{S}$ and transmits 
   \begin{align*}
    \frac{1}{N}\sum^{N}_{j=1}\Big\{&2x_j(t)-x_j(t-1)-\alpha \nabla f_j(x_j(t))\\
    &+\alpha \nabla f_j(x_j(t-1))\Big\}
   \end{align*}
   to all clients.
   Each client $i$ updates its local state as
   \begin{align}\label{equation_state_new_condition}
       x_{i}&(t+1)=\frac{c\alpha}{N}\sum^{N}_{j=1}\Big\{ 2x_j(t)-x_j(t-1)\nonumber\\
       &-\alpha \nabla f_j(x_j(t))+\alpha \nabla f_j(x_j(t-1))\Big\}\nonumber\\
       &+(1-c\alpha)\Big\{2x_i(t)-x_i(t-1)\nonumber\\
       &-\alpha \nabla f_i(x_i(t))+\alpha \nabla f_i(x_i(t-1))\Big\}.
   \end{align}
   \ELSE{ \STATE Each client $i$ does local training}
   \begin{align}\label{equation_state_new_condition2}
       x_{i}(t+1)= &2x_i(t)-x_i(t-1)-\alpha \nabla f_i(x_i(t))\nonumber\\
       &+\alpha \nabla f_i(x_i(t-1)).
   \end{align}
   \ENDIF
   \ENDFOR
   \ENDFOR
\end{algorithmic}
\end{algorithm}

In FedCET, only one $n$-dimension vector is transmitted periodically between the parameter server and clients after every $\tau$ local updates. Specifically, at each communication round ($t+1=k\tau$, $k\geq 0$), the vector $$2x_i(t)-x_i(t-1)-\alpha\nabla f_i(x_i(t))+\alpha \nabla f_i(x_i(t-1))$$ is transmitted from the client $i\in\mathcal{S}$ to the parameter server, which then transmits a vector $$\frac{1}{N}\sum^{N}_{i=1}\{ 2x_i(t)-x_i(t-1)-\alpha \nabla f_i(x_i(t))+\alpha \nabla f_i(x_i(t-1))\}$$ back to all clients. Then, each client performs $\tau$ times local training steps based on (\ref{equation_state_new_condition}) and (\ref{equation_state_new_condition2}). Each client uses the weight parameter $c$ and the learning rate $\alpha$ to adjust its weight for local training in (\ref{equation_state_new_condition}).  The detailed explanation for designing $\alpha$ and $c$ can be found in the proof of Corollary \ref{corollary1} and Theorem \ref{most_important}. It is key to eliminate the ``client-drift'' and guarantee convergence to the exact optimal solution. Next, we provide the detailed convergence analysis of FedCET.

\subsection{Convergence Analysis}
The matrix form of (\ref{equation_state_new_condition}) and (\ref{equation_state_new_condition2}) is used to analyze the convergence of FedCET. More specifically, by defining
\begin{align}
x(t)=&[x_1(t),x_2(t),\cdots,x_N(t)]^{\bf T},\label{matrix_form_x_x}\\
\nabla f(x)=&[\nabla f_1(x_1),f_2(x_2),\cdots,\nabla f_N(x_N)]^{\bf T},\label{matrix_form_f_f}\\
d(t)=&\frac{1}{\alpha}(x(t-1)-x(t))-\nabla f(t-1),\label{definition_of_d(t)}
\end{align}
where $x=[x_1,x_2,\cdots,x_N]^{\bf T}\in\mathbb{R}^{N\times n}$ and $\nabla f(t)$ is used to represent $\nabla f(x(t))$ for the convenience of expression, we can write the updates of (\ref{equation_state_new_condition}) and (\ref{equation_state_new_condition2}) in a more compact matrix form in (\ref{matrix_algorithm_equivalent_form}), which is summarized in Lemma \ref{matrix_form_x_d} below:
\begin{lemma}\label{matrix_form_x_d}
The processes (\ref{equation_state_new_condition}) and (\ref{equation_state_new_condition2}) in
Algorithm \ref{new_algorithm_new_condition} can be expressed as the following matrix form
\begin{equation}\label{matrix_algorithm_equivalent_form}
\left\{
\begin{aligned}
d(t+1)=&d(t)+c(\mathbf{I}-W(t+1))\{x(t)-\alpha\nabla f(t)\\
&-\alpha d(t)\},\\
x(t+1)=&x(t)-\alpha\nabla f(x(t))-\alpha d(t+1),
\end{aligned}
\right.
\end{equation}
where \begin{equation}\label{matrix_W}
W(t+1)=\left\{
\begin{aligned}
\frac{1}{N}\mathbf{1}_{N}\mathbf{1}^{\bf T}_{N}, &\quad t+1=\tau k,\\
 \mathbf{I},\ \quad  &\quad t+1\ne\tau k.
\end{aligned}
\right.
\end{equation}
\end{lemma}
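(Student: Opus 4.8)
The plan is to verify the equivalence by a direct, per-step substitution, using the definition (\ref{definition_of_d(t)}) of $d(t)$ as the bridge between the two formulations. The first observation I would make is that the second line of (\ref{matrix_algorithm_equivalent_form}) carries no information beyond the definition of $d(\cdot)$: rearranging (\ref{definition_of_d(t)}) with the index shifted to $t+1$ gives $x(t+1)=x(t)-\alpha\nabla f(t)-\alpha d(t+1)$, so this equation holds identically for every $t$ and requires no appeal to the algorithm. Hence the entire content of the lemma lies in reproducing the first line of (\ref{matrix_algorithm_equivalent_form}), i.e. the recursion for $d(t+1)$, from the two update rules (\ref{equation_state_new_condition}) and (\ref{equation_state_new_condition2}).

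To treat both rules at once, I would introduce the shared (stacked) vector $y(t)=2x(t)-x(t-1)-\alpha\nabla f(t)+\alpha\nabla f(t-1)$, which is exactly the quantity each client transmits. In matrix form, the averaging over clients is the operator $\frac{1}{N}\mathbf{1}_N\mathbf{1}_N^{\bf T}$ acting on the rows of the $N\times n$ iterate $y(t)$, so both (\ref{equation_state_new_condition}) and (\ref{equation_state_new_condition2}) can be written uniformly as $x(t+1)=\big[(1-c\alpha)\mathbf{I}+c\alpha W(t+1)\big]y(t)=y(t)-c\alpha(\mathbf{I}-W(t+1))y(t)$, with $W(t+1)$ given by (\ref{matrix_W}); at local steps $W(t+1)=\mathbf{I}$ collapses this to $x(t+1)=y(t)$, while at communication rounds it produces the convex combination of the local and averaged directions. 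The crucial simplification, which I expect to carry out next, is to eliminate $x(t-1)$ from $y(t)$ using (\ref{definition_of_d(t)}) in the form $x(t-1)=x(t)+\alpha\nabla f(t-1)+\alpha d(t)$; after cancellation this yields the compact identity $y(t)=x(t)-\alpha\nabla f(t)-\alpha d(t)$, which is precisely the bracketed term appearing in the first line of (\ref{matrix_algorithm_equivalent_form}).

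With these two ingredients the conclusion follows by substituting $x(t+1)=y(t)-c\alpha(\mathbf{I}-W(t+1))y(t)$ into the definition $d(t+1)=\frac{1}{\alpha}(x(t)-x(t+1))-\nabla f(t)$; using $x(t)-y(t)=\alpha\nabla f(t)+\alpha d(t)$ the gradient terms cancel and one is left with $d(t+1)=d(t)+c(\mathbf{I}-W(t+1))y(t)$, which is the desired recursion once $y(t)=x(t)-\alpha\nabla f(t)-\alpha d(t)$ is re-inserted. This is essentially a bookkeeping computation rather than a genuinely difficult estimate, so I do not anticipate a serious obstacle; the only point requiring care is the unification step, namely recognizing that the correct matrix encoding of the averaging is left-multiplication by $\frac{1}{N}\mathbf{1}_N\mathbf{1}_N^{\bf T}$ and that the local rule is simply the $W(t+1)=\mathbf{I}$ specialization of the communication rule, so that no separate boundary analysis between local and communication steps is needed. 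It is also worth noting that the prescribed initialization $x_i(-1)=x_i(-2)-\alpha\nabla f_i(x_i(-2))$ makes $d(-1)=\mathbf{0}_{N\times n}$, which anchors the recursion consistently, although this is not strictly required by the statement of the lemma.
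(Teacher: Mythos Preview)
Your proposal is correct and follows essentially the same route as the paper: both write the stacked update as $x(t+1)=\widetilde W(t+1)\,y(t)$ with $\widetilde W(t+1)=(1-c\alpha)\mathbf{I}+c\alpha W(t+1)$ and $y(t)=2x(t)-x(t-1)-\alpha\nabla f(t)+\alpha\nabla f(t-1)$, use (\ref{definition_of_d(t)}) to reduce $y(t)$ to $x(t)-\alpha\nabla f(t)-\alpha d(t)$, and then substitute into the definition of $d(t+1)$ to obtain the recursion. The only cosmetic difference is ordering: you note at the outset that the $x$-equation in (\ref{matrix_algorithm_equivalent_form}) is just (\ref{definition_of_d(t)}) shifted by one, whereas the paper derives it at the end from the $d$-recursion.
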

\begin{proof}
From (\ref{equation_state_new_condition}), (\ref{equation_state_new_condition2}), (\ref{matrix_form_x_x}), (\ref{matrix_form_f_f}), and (\ref{matrix_W}), we know that
\begin{align}
x(t+1)=\widetilde{W}(t+1)\Big\{&2x(t)-x(t-1)-\alpha\nabla f(x(t))\nonumber\\
&+\alpha\nabla f(x(t-1))\Big\},\label{expression_x(t+1)}
\end{align}
holds, where $$\widetilde{W}(t+1)=(1-c\alpha)\mathbf{I}+c\alpha W(t+1).$$ Then, based on the definition of $d(t)$ in (\ref{definition_of_d(t)}), we have
\begin{align}
&d(t+1)\nonumber\\
=&\frac{1}{\alpha}(x(t)-x(t+1))-\nabla f(t)\nonumber\\
=&\frac{1}{\alpha}x(t)-\nabla f(t)-\frac{1}{\alpha}\widetilde{W}(t+1)\{2x(t)-x(t-1)\nonumber\\
&-\alpha\nabla f(t)+\alpha\nabla f(t-1)\}\nonumber\\
=&d(t)+c(\mathbf{I}-W(t+1))\{2x(t)-x(t-1)-\alpha\nabla f(t)\nonumber\\
&+\alpha\nabla f(t-1)\}\nonumber\\
=&d(t)+c(\mathbf{I}-W(t+1))\{x(t)-\alpha\nabla f(t)-\alpha d(t)\}.\label{expression_d(t+1)}
\end{align}
Using (\ref{expression_x(t+1)}) and (\ref{expression_d(t+1)}), we can derive that
\begin{align*}
x(t+1)=&\widetilde{W}(t+1)\{x(t)-\alpha\nabla f(t)-\alpha d(t)\}\\
=&-c\alpha(\mathbf{I}-W(t+1))\{x(t)-\alpha\nabla f(t)-\alpha d(t)\}\\
&+x(t)-\alpha\nabla f(t)-\alpha d(t)\\
=&x(t)-\alpha\nabla f(t)-\alpha d(t+1),
\end{align*}
which completes the proof.
\end{proof}

Next, we prove that the fixed point of (\ref{matrix_algorithm_equivalent_form}) and the optimal value $x^*=\arg\min_{x\in\mathbb{R}^n}f(x)$ of the federated learning problem (\ref{FL_problem_cdc}) are the same. To this end, we first provide Lemma \ref{fix_point_lemma1} below:
\begin{lemma}\label{fix_point_lemma1}
$(\mathbf{d}^*,\mathbf{x}^*)$ is the fixed point of (\ref{matrix_algorithm_equivalent_form}) if and only if the following two equalities are satisfied:
\begin{equation}\label{fix_point}
\left\{
\begin{aligned}
\mathbf{d}^*+\nabla f(\mathbf{x}^*)&=\mathbf{0}_{N\times n},\\
(\mathbf{I}-\frac{1}{N}\mathbf{1}_{N}\mathbf{1}^{\bf T}_{N})\mathbf{x}^*&=\mathbf{0}_{N\times n},
\end{aligned}
\right.
\end{equation}
where $\mathbf{x}^*\in\mathbb{R}^{N\times n}$ and $\nabla f(\mathbf{x}^*)$ is defined based on (\ref{matrix_form_f_f}).
\end{lemma}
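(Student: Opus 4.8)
The plan is to exploit the fact that a fixed point of the time-varying recursion (\ref{matrix_algorithm_equivalent_form}) must be invariant under \emph{both} values that the mixing matrix $W(t+1)$ can take: the local-update value $W(t+1)=\mathbf{I}$ (when $t+1\neq\tau k$) and the communication value $W(t+1)=\frac{1}{N}\mathbf{1}_{N}\mathbf{1}^{\bf T}_{N}$ (when $t+1=\tau k$). Setting $d(t+1)=d(t)=\mathbf{d}^*$ and $x(t+1)=x(t)=\mathbf{x}^*$ in both cases reduces the whole verification to pure substitution, so no analytic estimates are needed.

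For the forward direction, I would first take the local-update case $W(t+1)=\mathbf{I}$. The factor $\mathbf{I}-W(t+1)=\mathbf{0}_{N\times N}$ makes the $d$-equation hold automatically, while the second line of (\ref{matrix_algorithm_equivalent_form}) collapses to $\mathbf{x}^*=\mathbf{x}^*-\alpha\nabla f(\mathbf{x}^*)-\alpha\mathbf{d}^*$. Since $\alpha>0$, this immediately yields the first equality $\mathbf{d}^*+\nabla f(\mathbf{x}^*)=\mathbf{0}_{N\times n}$. I would then turn to the communication case $W(t+1)=\frac{1}{N}\mathbf{1}_{N}\mathbf{1}^{\bf T}_{N}$: there the $d$-equation forces $c(\mathbf{I}-\frac{1}{N}\mathbf{1}_{N}\mathbf{1}^{\bf T}_{N})\{\mathbf{x}^*-\alpha\nabla f(\mathbf{x}^*)-\alpha\mathbf{d}^*\}=\mathbf{0}_{N\times n}$. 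Substituting the first equality just obtained shows that the bracketed term equals $\mathbf{x}^*$, and since $c>0$ this delivers the second equality $(\mathbf{I}-\frac{1}{N}\mathbf{1}_{N}\mathbf{1}^{\bf T}_{N})\mathbf{x}^*=\mathbf{0}_{N\times n}$.

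For the converse, I would assume the two equalities in (\ref{fix_point}) and feed $(\mathbf{d}^*,\mathbf{x}^*)$ into (\ref{matrix_algorithm_equivalent_form}). The first equality makes $\mathbf{x}^*-\alpha\nabla f(\mathbf{x}^*)-\alpha\mathbf{d}^*=\mathbf{x}^*$, so the $d$-update simplifies to $d(t+1)=\mathbf{d}^*+c(\mathbf{I}-W(t+1))\mathbf{x}^*$. This returns $\mathbf{d}^*$ when $W(t+1)=\mathbf{I}$ trivially, and also when $W(t+1)=\frac{1}{N}\mathbf{1}_{N}\mathbf{1}^{\bf T}_{N}$ by the second equality; the $x$-update then returns $\mathbf{x}^*$ again by the first equality. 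Hence $(\mathbf{d}^*,\mathbf{x}^*)$ reproduces itself and is a fixed point.

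The argument is essentially routine substitution, so there is no serious analytic obstacle. The only point requiring care — and the reason the characterization has two conditions rather than one — is recognizing that invariance must hold \emph{simultaneously} for the two distinct phases of the periodic matrix $W(t+1)$, with the positivity of $\alpha$ and of $c$ used to strip these scalar factors and conclude the two equalities.
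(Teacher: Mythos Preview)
Your proposal is correct and follows essentially the same approach as the paper's proof: both directions hinge on substituting $(\mathbf{d}^*,\mathbf{x}^*)$ into (\ref{matrix_algorithm_equivalent_form}) and checking invariance under the two possible values of $W(t+1)$, with the $x$-update yielding the first equality and the $d$-update at communication rounds yielding the second. Your write-up is in fact a bit more explicit than the paper's (which simply says ``one can easily obtain (\ref{fix_point})'' after writing down the two fixed-point equations), but the logic is identical.
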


\begin{proof}
We prove Lemma \ref{fix_point_lemma1} from two aspects:
\begin{itemize}
\item[$\Rightarrow$] If $(\mathbf{d}^*,\mathbf{x}^*)$ is the fixed point of (\ref{iteration_matrix1}) and (\ref{iteration_matrix2}), for any $t+1\geq 1$, we have
\begin{align*}
\mathbf{d}^*=&\mathbf{d}^*+c(\mathbf{I}-W(t+1))\{\mathbf{x}^*-\alpha\nabla f(\mathbf{x}^*)-\alpha\mathbf{d}^*\},\\
\mathbf{x}^*=&\mathbf{x}^*-\alpha\nabla f(\mathbf{x}^*)-\alpha\mathbf{d}^*.
\end{align*}
From the definition of $\widetilde{W}(t+1)$ in (\ref{matrix_W}), one can easily obtain (\ref{fix_point}).
\item[$\Leftarrow$] If (\ref{fix_point}) is satisfied, then under $(d(t),x(t))=(\mathbf{d}^*,\mathbf{x}^*)$, for $t+1\neq k\tau$, we have  
\begin{align*}
d(t+1)&=\mathbf{d}^*+c(\mathbf{I}-\mathbf{I})\{\mathbf{x}^*-\alpha\nabla f(\mathbf{x}^*)-\alpha \mathbf{d}^*\}\\
&=\mathbf{d}^*
\end{align*}
and, for $t+1=k\tau$, we have
\begin{align*}
d(t+1)=&\mathbf{d}^*+c(\mathbf{I}-\frac{1}{N}\mathbf{1}_{N}\mathbf{1}^{\bf T}_{N})\{\mathbf{x}^*-\alpha\nabla f(\mathbf{x}^*)\\
&-\alpha \mathbf{d}^*\}\\
=&\mathbf{d}^*.
\end{align*}
Moreover, $x(t+1)=\mathbf{x}^*-\alpha\nabla f(\mathbf{x}^*)-\alpha\mathbf{d}^*=\mathbf{x}^*$. Thus, $(\mathbf{d}^*,\mathbf{x}^*)$ is the fixed point of (\ref{matrix_algorithm_equivalent_form}).
\end{itemize}
\end{proof}

Then, we need the following Lemma \ref{fix_point_lemma2} from \cite{Ming_Yan1} to characterize the property of the optimal value $x^*=\arg\min_{x\in\mathbb{R}^n}f(x)$ of the federated learning problem (\ref{FL_problem_cdc})
\begin{lemma}\label{fix_point_lemma2}
The fix point $\mathbf{x}^*$ in Lemma \ref{fix_point_lemma1} is the optimal solution to (\ref{FL_problem_cdc}), i.e., $\mathbf{x}^*=[x^*,x^*,\cdots,x^*]^{\bf T}\in\mathbb{R}^{N\times n}$, if and only if there exists $\mathbf{p}^*$ such that
\begin{equation}\label{fix_point_2}
\left\{\begin{aligned}
(\mathbf{I}-\frac{1}{N}\mathbf{1}_{N}\mathbf{1}^{\bf T}_{N})\mathbf{p}^*+\nabla f(\mathbf{x}^*)&=\mathbf{0}_{N\times n},\\
(\mathbf{I}-\frac{1}{N}\mathbf{1}_{N}\mathbf{1}^{\bf T}_{N})\mathbf{x}^*&=\mathbf{0}_{N\times n}.
\end{aligned}
\right.
\end{equation}
\end{lemma}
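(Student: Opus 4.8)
The plan is to exploit the fact that the averaging matrix, which I will denote by $P=\frac{1}{N}\mathbf{1}_{N}\mathbf{1}^{\bf T}_{N}$, is symmetric and idempotent, so that $P$ and $\mathbf{I}-P$ are complementary orthogonal projections: $P$ projects (row-wise) onto the consensus subspace $\mathrm{span}\{\mathbf{1}_N\}$, while $\mathbf{I}-P$ projects onto its orthogonal complement, whose range coincides with $\ker P$, i.e., the set of $N\times n$ matrices whose column average is zero. The second equation in (\ref{fix_point_2}) is identical to the one in Lemma \ref{fix_point_lemma1} and simply says $(\mathbf{I}-P)\mathbf{x}^*=\mathbf{0}_{N\times n}$, which is equivalent to $\mathbf{x}^*=[x^*,x^*,\cdots,x^*]^{\bf T}$ for some common $x^*\in\mathbb{R}^n$. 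Hence all the work lies in relating the first equation to the optimality of this common value, and I would establish the equivalence in two directions.

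For the forward direction, suppose $\mathbf{x}^*=[x^*,\cdots,x^*]^{\bf T}$ with $x^*$ the minimizer of $f$. The consensus equation is immediate. Since $x^*$ is optimal, the first-order condition $\nabla f(x^*)=\frac{1}{N}\sum_{i=1}^N\nabla f_i(x^*)=\mathbf{0}_n$ holds, which is precisely the statement that the column average of $\nabla f(\mathbf{x}^*)$ vanishes, i.e., $P\nabla f(\mathbf{x}^*)=\mathbf{0}_{N\times n}$. Therefore $\nabla f(\mathbf{x}^*)$ lies in $\ker P=\mathrm{range}(\mathbf{I}-P)$, so a preimage exists; concretely, $\mathbf{p}^*=-\nabla f(\mathbf{x}^*)$ gives $(\mathbf{I}-P)\mathbf{p}^*=-(\mathbf{I}-P)\nabla f(\mathbf{x}^*)=-\nabla f(\mathbf{x}^*)$, using $P\nabla f(\mathbf{x}^*)=\mathbf{0}_{N\times n}$, which is exactly the first equation of (\ref{fix_point_2}).

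For the converse, assume some $\mathbf{p}^*$ satisfies (\ref{fix_point_2}). The second equation again forces consensus, so $\mathbf{x}^*=[x^*,\cdots,x^*]^{\bf T}$ for some $x^*$. Left-multiplying the first equation by $P$ and using the idempotence identity $P(\mathbf{I}-P)=P-P^2=\mathbf{0}_{N\times N}$ eliminates the $\mathbf{p}^*$ term and leaves $P\nabla f(\mathbf{x}^*)=\mathbf{0}_{N\times n}$, that is, $\frac{1}{N}\sum_{i=1}^N\nabla f_i(x^*)=\nabla f(x^*)=\mathbf{0}_n$. Under Assumption \ref{strong_convex_assumption}, $f$ is strongly convex with a unique stationary point, so $x^*=\arg\min_{x}f(x)$ and thus $\mathbf{x}^*$ is the optimal solution to (\ref{FL_problem_cdc}).

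This argument is essentially projection and dual-variable bookkeeping rather than a hard estimate, so I do not anticipate a genuine obstacle. The one point needing care is the existence of $\mathbf{p}^*$ in the forward direction, which is not automatic but follows cleanly once one observes that optimality places $\nabla f(\mathbf{x}^*)$ exactly in $\mathrm{range}(\mathbf{I}-P)$; if one prefers not to exhibit a specific preimage, $\mathbf{p}^*=-(\mathbf{I}-P)^{\dagger}\nabla f(\mathbf{x}^*)$ works equally well since $\mathbf{I}-P$ is a symmetric projection. The remaining subtlety is purely notational, namely keeping track of the $N\times n$ matrix shapes and confirming that the column-average operator induced by $P$ reproduces the global gradient $\nabla f(x^*)$.
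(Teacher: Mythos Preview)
Your argument is correct. The paper itself does not prove this lemma but simply cites it from \cite{Ming_Yan1}, so your self-contained projection argument---using idempotence of $P=\frac{1}{N}\mathbf{1}_N\mathbf{1}_N^{\bf T}$ to show that optimality is equivalent to $P\nabla f(\mathbf{x}^*)=\mathbf{0}_{N\times n}$, and hence to the existence of a preimage $\mathbf{p}^*$ under $\mathbf{I}-P$---actually supplies more detail than the paper does.
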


From Lemma \ref{fix_point_lemma1} and Lemma \ref{fix_point_lemma2}, we know that $(\mathbf{d}^*,\mathbf{x}^*)$ with $\mathbf{d}^*=(\mathbf{I}-\frac{1}{N}\mathbf{1}_{N}\mathbf{1}^{\bf T}_{N})\mathbf{p}^*$ and $\mathbf{x}^*=[x^*,x^*,\cdots,x^*]^{\bf T}$ is the fixed point of  (\ref{matrix_algorithm_equivalent_form}). Since the information is transmitted with a period $\tau$ in FedCET, the relationship between $(d(k\tau+\tau),x(k\tau+\tau))$ and $(d(k\tau),x(k\tau))$ should be derived to analyze its convergence. We can derive the following Lemma \ref{matrix_iteration_lemma} from Lemma \ref{matrix_form_x_d}.

\begin{lemma}\label{matrix_iteration_lemma}
For any communication round $k\geq 0$, we can have the following relationship
\begin{align}
d(\tau k+\tau)=&d(\tau k)+c(\mathbf{I}-\frac{1}{N}\mathbf{1}_{N}\mathbf{1}^{\bf T}_{N})\{x(\tau k)-\tau\alpha {\nabla} f(\tau k)\nonumber\\
&-\tau\alpha d(\tau k)+A(k)\},\label{iteration_matrix1}\\
x(\tau k+\tau)=&x(\tau k)-\tau\alpha {\nabla} f(\tau k)-\tau\alpha d(\tau k+\tau)\nonumber\\
&+A(k)+B(k),\label{iteration_matrix2}
\end{align}
where 
\begin{align*}
A(k)=&(\tau-1)\alpha {\nabla} f(\tau k)-\alpha\sum^{\tau-1}_{h=1} {\nabla} f(\tau k+h),\\
B(k)=&(\tau-1)\alpha d(\tau k+\tau)-(\tau-1)\alpha d(\tau k).
\end{align*}
\end{lemma}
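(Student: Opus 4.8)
The plan is to exploit the block structure of $W(t+1)$ in (\ref{matrix_W}): during the $\tau-1$ local steps inside a communication round one has $t+1\ne\tau k$, so $\mathbf{I}-W(t+1)=\mathbf{0}$ and the first line of (\ref{matrix_algorithm_equivalent_form}) freezes $d$, i.e. $d(\tau k+j)=d(\tau k)$ for all $1\le j\le\tau-1$. The only step that actually moves $d$ is the communication step $t+1=\tau(k+1)$, where $W=\frac{1}{N}\mathbf{1}_{N}\mathbf{1}^{\bf T}_{N}$. This reduces the whole lemma to two ingredients: (i) unrolling the $x$-recursion across the frozen-$d$ local phase, and (ii) substituting that closed form into the single $d$- and $x$-updates that fire at the communication step.

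First I would unroll the second line of (\ref{matrix_algorithm_equivalent_form}). Since $d(\tau k+j)=d(\tau k)$ throughout the local phase, the $x$-update collapses to $x(\tau k+j+1)=x(\tau k+j)-\alpha\nabla f(\tau k+j)-\alpha d(\tau k)$, and telescoping from $j=0$ yields the closed form $x(\tau k+m)=x(\tau k)-\alpha\sum_{h=0}^{m-1}\nabla f(\tau k+h)-m\alpha d(\tau k)$ for $1\le m\le\tau-1$. I would single out the case $m=\tau-1$, which is the state that feeds into the communication step.

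For (\ref{iteration_matrix1}) I would then write the $d$-update at $t+1=\tau(k+1)$, replace $d(\tau k+\tau-1)$ by $d(\tau k)$, and substitute the $m=\tau-1$ formula for $x(\tau k+\tau-1)$ inside the braces; the braces become $x(\tau k)-\alpha\sum_{h=0}^{\tau-1}\nabla f(\tau k+h)-\tau\alpha d(\tau k)$. Matching this against the target $x(\tau k)-\tau\alpha\nabla f(\tau k)-\tau\alpha d(\tau k)+A(k)$ forces precisely $A(k)=(\tau-1)\alpha\nabla f(\tau k)-\alpha\sum_{h=1}^{\tau-1}\nabla f(\tau k+h)$, i.e. $A(k)$ is exactly the gap between the honest gradient sum $\alpha\sum_{h=0}^{\tau-1}\nabla f(\tau k+h)$ and the single-gradient surrogate $\tau\alpha\nabla f(\tau k)$. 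For (\ref{iteration_matrix2}) I would apply the $x$-update at the communication step, $x(\tau k+\tau)=x(\tau k+\tau-1)-\alpha\nabla f(\tau k+\tau-1)-\alpha d(\tau k+\tau)$, substitute the same $m=\tau-1$ closed form, and rewrite the resulting term $-(\tau-1)\alpha d(\tau k)-\alpha d(\tau k+\tau)$ as $-\tau\alpha d(\tau k+\tau)+B(k)$; this is exactly where $B(k)=(\tau-1)\alpha d(\tau k+\tau)-(\tau-1)\alpha d(\tau k)$ comes from, converting the frozen multiplier on $d(\tau k)$ into the end-of-round multiplier $\tau\alpha$ on $d(\tau k+\tau)$.

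The computation is purely algebraic, so the only real care needed — and the step I expect to be the main obstacle — is the bookkeeping of the summation ranges: one must keep the gradient index running over the full local window $h=0,\dots,\tau-1$ while the definitions of $A(k)$ and $B(k)$ peel off the $h=0$ gradient and swap the frozen $d(\tau k)$ for the updated $d(\tau k+\tau)$. The conceptual point worth stating cleanly is that $A(k)$ and $B(k)$ are not genuine perturbations but the exact correction terms reconciling the $\tau$-step dynamics with the one-step NIDS template; they are benign for convergence because at the fixed point all local gradients and all local $d$-values coincide, which is the structure supplied by Lemma \ref{fix_point_lemma1}.
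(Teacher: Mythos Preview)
Your proposal is correct and follows essentially the same route as the paper: both arguments use $W(t+1)=\mathbf{I}$ on the local steps to freeze $d(\tau k+j)=d(\tau k)$, then unroll the $x$-recursion (the paper does this by induction, you by telescoping) to get $x(\tau k+\tau-1)=x(\tau k)-\alpha\sum_{h=0}^{\tau-2}\nabla f(\tau k+h)-(\tau-1)\alpha d(\tau k)$, and finally substitute into the single communication-step update to read off $A(k)$ and $B(k)$ exactly as you describe. Your interpretation of $A(k)$ and $B(k)$ as the algebraic correctors reconciling the honest $\tau$-step sums with the one-step NIDS template is accurate and matches how the paper uses them downstream.
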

\begin{proof}
Based on the definition of ${W}(t+1)$ in (\ref{matrix_W}), we know that ${W}(h)=\mathbf{I}$ holds for any $\tau k+1\leq h<\tau k+\tau$. Thus, for any $k\geq 0$, we have
\begin{align*}
d(\tau k+1)=&d(\tau k),\\
x(\tau k+1)=&x(\tau k)-\alpha {\nabla} f(\tau k)-\alpha d(\tau k),
\end{align*}
from (\ref{matrix_algorithm_equivalent_form}) in Lemma \ref{matrix_form_x_d}. 

For any $1<s\leq\tau-2$, we assume
\begin{align*}
d(\tau k+s)=&d(\tau k),\\
x(\tau k+s)=&x(\tau k)-\alpha\sum^{s-1}_{h=0} {\nabla} f(\tau k+h)-s\alpha d(\tau k).
\end{align*}
Then, from (\ref{matrix_algorithm_equivalent_form}) in Lemma \ref{matrix_form_x_d}, we know that
\begin{align*}
d(\tau k+s+1)=&d(\tau k),\\
x(\tau k+s+1)=&x(\tau k)-\alpha\sum^{s}_{h=0} {\nabla} f(\tau k+h)\\
&-(s+1)\alpha d(\tau k).
\end{align*}
Thus, using mathematical induction, we can obtain
\begin{align*}
d(\tau k+\tau-1)=&d(\tau k),\\
x(\tau k+\tau-1)=&x(\tau k)-\alpha\sum^{\tau-2}_{h=0} {\nabla} f(\tau k+h)\\
&-(\tau-1)\alpha d(\tau k).
\end{align*}
Because ${W}(k\tau+\tau)=\frac{1}{N}\mathbf{1}_{N}\mathbf{1}^{\bf T}_{N}$ always holds, we have
\begin{align*}
d(\tau k+\tau)=&d(\tau k)+c(\mathbf{I}-\frac{1}{N}\mathbf{1}_{N}\mathbf{1}^{\bf T}_{N})\Big\{x(\tau k+\tau-1)\\
&-\alpha d(\tau k)-\alpha {\nabla} f(\tau k+\tau-1)\Big\},\\
x(\tau k+\tau)=&x(\tau k+\tau-1)-\alpha {\nabla} f(\tau k+\tau-1)\\
&-\alpha d(\tau k+\tau).
\end{align*}
Based on the definition of $A(k)$ and $B(k)$, we have
\begin{align*}
d(\tau k+\tau)=&d(\tau k)+c(\mathbf{I}-\frac{1}{N}\mathbf{1}_{N}\mathbf{1}^{\bf T}_{N})\{x(\tau k)-\tau\alpha {\nabla} f(\tau k)\\
&-\tau\alpha d(\tau k)+A(k)\},\\
x(\tau k+\tau)=&x(\tau k)-\tau\alpha {\nabla} f(\tau k)-\tau\alpha d(\tau k+\tau)\\
&+A(k)+B(k),
\end{align*}
which completes the proof.
\end{proof}

From Lemma \ref{fix_point_lemma1}, Lemma \ref{fix_point_lemma2}, and Lemma \ref{matrix_iteration_lemma}, $\Vert x(\tau k)-\mathbf{x}^*\Vert$ can be utilized to analyze the convergence of FedCET. Thus, we present the mathematical properties of 
\begin{align*}
\Vert x(\tau k+\tau)-\mathbf{x}^*\Vert^2\quad \text{and} \quad \Vert d(k\tau)-\mathbf{d}^*\Vert^2
\end{align*}
in the following Theorem \ref{most_important}.

\begin{theorem}\label{most_important}
For any communication round $k\geq 0$, if $0<c\leq\frac{\mu}{2(\mu\alpha+{4})}$ and $0<\alpha\leq 2\tau L$ hold, we have
\begin{align}
&(1-\tau\mu\alpha){\Vert} x(\tau k+\tau)-\mathbf{x}^*{\Vert}^2_{(\tau\alpha\mathbf{I})^{-1}}\nonumber\\
&+{\Vert} d(k\tau+\tau)-\mathbf{d}^*{\Vert}^2_{M+(1-\tau\mu\alpha)\tau\alpha\mathbf{I}}\nonumber\\
\leq&{\Big\{}1+ L\mu\tau^2\alpha^2+\frac{2\tau^3}{\mu}(1+\frac{2}{\tau})^{2\tau-2} L^4\alpha^3-2\tau\mu\alpha\nonumber\\
&-\tau^4(1+\frac{2}{\tau})^{2\tau-2} L^4\alpha^4 {\Big\}}{\Vert} x(\tau k)-\mathbf{x}^*{\Vert}^2_{(\tau\alpha\mathbf{I})^{-1}}\nonumber\\
&+{\Vert} d(\tau k)-\mathbf{d}^*{\Vert}^2_{M+(\frac{2}{\tau\mu\alpha}-1){\tau^3(1+\frac{2}{\tau})^{2\tau-2}} L^2\alpha^3\mathbf{I}},\label{theorem_equation1}
\end{align}
where $M=c^{-1}(\mathbf{I}-\frac{1}{N}\mathbf{1}_{N}\mathbf{1}^{\bf T}_{N})^{\dagger}-\alpha \mathbf{I}$.
\end{theorem}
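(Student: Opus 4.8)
The plan is to read (\ref{theorem_equation1}) as a single communication-round descent inequality and prove it by combining the per-round dynamics of Lemma \ref{matrix_iteration_lemma} with the fixed-point characterizations of Lemma \ref{fix_point_lemma1} and Lemma \ref{fix_point_lemma2}. Abbreviate the averaging projection as $P:=\mathbf{I}-\frac{1}{N}\mathbf{1}_N\mathbf{1}_N^{\bf T}$ and introduce the error variables $\tilde{x}(k):=x(\tau k)-\mathbf{x}^*$ and $\tilde{d}(k):=d(\tau k)-\mathbf{d}^*$. First I would subtract the fixed-point identities from (\ref{iteration_matrix1})--(\ref{iteration_matrix2}). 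Using $P\mathbf{x}^*=\mathbf{0}$, $\mathbf{d}^*=P\mathbf{p}^*$ (so that $P\mathbf{d}^*=\mathbf{d}^*$ and $\tilde{d}(k)\in\mathrm{range}(P)$ for all $k$), and $\nabla f(\mathbf{x}^*)=-\mathbf{d}^*$, and observing that the drift terms vanish at the fixed point, this yields the error recursion $\tilde{d}(k+1)=\tilde{d}(k)+cP\{\tilde{x}(k)-\tau\alpha(\nabla f(\tau k)-\nabla f(\mathbf{x}^*))-\tau\alpha\tilde{d}(k)+A(k)\}$ together with $\tilde{x}(k+1)=\tilde{x}(k)-\tau\alpha(\nabla f(\tau k)-\nabla f(\mathbf{x}^*))-\tau\alpha\tilde{d}(k+1)+A(k)+B(k)$, in which all the ``client-drift'' is carried by $A(k)$ and $B(k)$.

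The crux will be bounding the drift term $A(k)=\alpha\sum_{h=1}^{\tau-1}(\nabla f(\tau k)-\nabla f(\tau k+h))$. By $L$-smoothness (Assumption \ref{smooth_assumption}) we have $\Vert\nabla f(\tau k)-\nabla f(\tau k+h)\Vert\le L\Vert x(\tau k+h)-x(\tau k)\Vert$, so I would first control the local deviation $\Vert x(\tau k+h)-x(\tau k)\Vert$. Since no averaging occurs during the $\tau-1$ local steps, the explicit local iterate $x(\tau k+h)=x(\tau k)-\alpha\sum_{j=0}^{h-1}\nabla f(\tau k+j)-h\alpha d(\tau k)$ derived inside the proof of Lemma \ref{matrix_iteration_lemma} gives a recursive estimate for this deviation; unrolling that recursion over the $\tau$ local steps is precisely where the amplification factor $(1+\frac{2}{\tau})^{2\tau-2}$ arises, and it lets me bound $\Vert A(k)\Vert^2$ by a constant multiple of $\alpha^2L^2(1+\frac{2}{\tau})^{2\tau-2}$ times a combination of $\Vert\tilde{x}(k)\Vert^2$ and $\alpha^2\Vert\tilde{d}(k)+\nabla f(\tau k)-\nabla f(\mathbf{x}^*)\Vert^2$. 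This is the step I expect to be the main obstacle: the recursion must be unrolled carefully to obtain the exact exponential factor, and the resulting estimate must be re-expressed in the same weighted norms appearing on the right-hand side of (\ref{theorem_equation1}).

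With the drift estimate in hand, I would run the energy/Lyapunov expansion. The metric is chosen in the spirit of NIDS \cite{Ming_Yan1}: measuring $\tilde{d}$ in the $M=c^{-1}P^{\dagger}-\alpha\mathbf{I}$ norm is designed so that the $cP$-increment in the $\tilde{d}$-recursion telescopes and the cross terms between the $x$- and $d$-updates cancel when $x$ is measured in the $(\tau\alpha\mathbf{I})^{-1}$ norm; here one uses that $\tilde{d}$ lies in $\mathrm{range}(P)$, on which $P^{\dagger}$ acts as the identity, so $M$ effectively reduces to $(c^{-1}-\alpha)\mathbf{I}$. Expanding $\Vert\tilde{x}(k+1)\Vert^2_{(\tau\alpha\mathbf{I})^{-1}}$ and $\Vert\tilde{d}(k+1)\Vert^2_{M+(1-\tau\mu\alpha)\tau\alpha\mathbf{I}}$ and collecting terms, I would handle the gradient inner products through Assumption \ref{strong_convex_assumption}, which furnishes the dominant contraction $-2\tau\mu\alpha\Vert\tilde{x}(k)\Vert^2_{(\tau\alpha\mathbf{I})^{-1}}$, and through the smoothness bound $\Vert\nabla f(\tau k)-\nabla f(\mathbf{x}^*)\Vert^2\le L^2\Vert\tilde{x}(k)\Vert^2$ combined with Young's inequality, which generate the $L\mu\tau^2\alpha^2$ term and feed the higher-order $\alpha^3$ and $\alpha^4$ contributions.

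Finally I would consolidate. The constraints $0<c\le\frac{\mu}{2(\mu\alpha+4)}$ and $0<\alpha\le 2\tau L$ are exactly what is needed to guarantee $M\succeq\mathbf{0}$ on $\mathrm{range}(P)$ (indeed $c\le\frac{\mu}{2(\mu\alpha+4)}$ implies $c^{-1}\ge\alpha$) and to let the splitting weights introduced by Young's inequality be chosen so that the surviving coefficients collapse into the prescribed form $1+L\mu\tau^2\alpha^2+\frac{2\tau^3}{\mu}(1+\frac{2}{\tau})^{2\tau-2}L^4\alpha^3-2\tau\mu\alpha-\tau^4(1+\frac{2}{\tau})^{2\tau-2}L^4\alpha^4$ on the $\tilde{x}(k)$ term and $M+(\frac{2}{\tau\mu\alpha}-1)\tau^3(1+\frac{2}{\tau})^{2\tau-2}L^2\alpha^3\mathbf{I}$ on the $\tilde{d}(k)$ term. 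Matching these coefficients is pure bookkeeping once the drift estimate is available, so the genuine difficulty remains the unrolling that produces the factor $(1+\frac{2}{\tau})^{2\tau-2}$.
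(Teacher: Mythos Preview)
Your plan matches the paper's proof almost step for step: subtract the fixed point from Lemma~\ref{matrix_iteration_lemma}, use the NIDS identity $\langle\tilde d(k+1),\tilde x(k+1)\rangle=\langle\tilde d(k+1),\tilde d(k+1)-\tilde d(k)\rangle_M$ together with polarization to telescope the $M$-norm, apply strong convexity and smoothness to extract the $-(2-\tau L\alpha)\mu$ contraction, split the cross terms involving $A(k)+B(k)$ by Young's inequality with parameters $c_1=c_2=\tau\mu\alpha$, and insert a drift bound obtained by unrolling the local steps (the paper isolates this last piece as Lemma~\ref{most_important2}, and your anticipated amplification factor $(1+\frac{2}{\tau})^{2\tau-2}$ is exactly what emerges there).

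There is, however, one real gap. You say the bound $c\le\frac{\mu}{2(\mu\alpha+4)}$ is there ``to guarantee $M\succeq\mathbf{0}$ on $\mathrm{range}(P)$''; but $M\succeq\mathbf{0}$ only needs $c\le 1/\alpha$, which is far weaker. The actual role of the sharp bound on $c$ is to handle $B(k)=(\tau-1)\alpha\bigl(d(\tau k+\tau)-d(\tau k)\bigr)$, which you never treat explicitly. Because $B(k)$ contains the \emph{future} iterate $d(\tau k+\tau)$, the estimate for $\Vert A(k)+B(k)\Vert^2$ unavoidably contributes a positive term $2\tau\alpha\,\Vert d(\tau k+\tau)-d(\tau k)\Vert^2$ (this is the first summand in Lemma~\ref{most_important2}); after the Young split with weight $\tau\mu\alpha$ it becomes $\bigl(\tfrac{2}{\tau\mu\alpha}-1\bigr)\,2\tau\alpha\,\Vert d(\tau k+\tau)-d(\tau k)\Vert^2$, which cannot be controlled by data at round $k$. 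The only way to dispose of it is to absorb it into the negative telescoping term $-\Vert d(\tau k+\tau)-d(\tau k)\Vert_M^2$ coming from the polarization identity, and that absorption requires $M\succeq\bigl(\tfrac{4}{\mu}-2\tau\alpha\bigr)\mathbf{I}$ on $\mathrm{range}(P)$, which is precisely what $c\le\frac{\mu}{2(\mu\alpha+4)}$ delivers. Without this mechanism your Lyapunov inequality will not close, so you should plan to bound $A(k)$ and $B(k)$ together and keep track of the $\Vert d(\tau k+\tau)-d(\tau k)\Vert^2$ contribution.
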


\begin{proof}
 See in Appendix \ref{most_important_proof}.
\end{proof}

Next, we prove that the learning rate $\alpha$ designed by Algorithm \ref{step_size_search} can ensure a linear convergence of FedCET.

\begin{corollary}\label{corollary1}
Under the learning rate obtained from Algorithm \ref{step_size_search}, there exist $0<\rho_1<1$ and $0<\rho_2<1$ such that
\begin{align*}
&(1-\tau\mu\alpha){\Vert} x(\tau k+\tau)-\mathbf{x}^*{\Vert}^2_{(\tau\alpha\mathbf{I})^{-1}}+{\Vert} d(k\tau+\tau)-\mathbf{d}^*{\Vert}^2_{M_1}\\
\leq&\rho_1(1-\tau\mu\alpha){\Vert} x(\tau k)-\mathbf{x}^*{\Vert}^2_{(\tau\alpha\mathbf{I})^{-1}}+\rho_2{\Vert} d(\tau k)-\mathbf{d}^*{\Vert}^2_{M_1}.
\end{align*}
holds for the iterates in Algorithm \ref{new_algorithm_new_condition}, where $M_1=M+(1-\tau\mu\alpha)\tau\alpha\mathbf{I}$ and $M=c^{-1}(\mathbf{I}-\frac{1}{N}\mathbf{1}_{N}\mathbf{1}^{\bf T}_{N})^{\dagger}-\alpha \mathbf{I}$. In addition, we have
\begin{align*}
{\Vert} x(\tau k)-\mathbf{x}^*{\Vert}^2\leq&\rho^k{\Vert} x(0)-\mathbf{x}^*{\Vert}^2+\frac{\rho^k\tau\alpha}{1-\tau\mu\alpha}{\Vert} d(0)-\mathbf{d}^*{\Vert}^2_{M_1},
\end{align*}
where $\rho=\max\{\rho_1,\rho_2\}$.
\end{corollary}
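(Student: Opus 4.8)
The plan is to obtain the Corollary as a direct consequence of Theorem \ref{most_important}: I would show that the two scalar factors multiplying the $x$- and $d$-weighted norms on the right-hand side of (\ref{theorem_equation1}) are each strictly dominated by the corresponding factors on the left, and that the strictness is precisely what the two inequalities maintained in the \textbf{while} loop of Algorithm \ref{step_size_search} encode. Abbreviating $\beta:=(1+\frac{2}{\tau})^{2\tau-2}$, the multiplier of $\Vert x(\tau k)-\mathbf{x}^*\Vert^2_{(\tau\alpha\mathbf{I})^{-1}}$ is
$$c_x:=1+L\mu\tau^2\alpha^2+\tfrac{2\tau^3}{\mu}\beta L^4\alpha^3-2\tau\mu\alpha-\tau^4\beta L^4\alpha^4,$$
and I would take $\rho_1:=c_x/(1-\tau\mu\alpha)$ (replacing it by any fixed constant in $(0,1)$ in the degenerate case $c_x\le 0$). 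Then $\rho_1<1$ is equivalent to $c_x<1-\tau\mu\alpha$, which after elementary rearrangement and factoring becomes exactly the second \textbf{while} condition $(1-\tau L\alpha)\tau\mu\alpha+\tau^3 L^4(\tau\alpha-\frac{2}{\mu})\beta\alpha^3>0$; since the output of Algorithm \ref{step_size_search} is the last learning rate for which this holds, $\rho_1<1$ follows, while $1-\tau\mu\alpha>0$ is secured by $\alpha_0<\frac{1}{2\tau L}$ and $\mu\le L$.

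For the $d$-term the first task is to record the invariance $d(\tau k)-\mathbf{d}^*\in\mathrm{range}(\Pi)$, $\Pi:=\mathbf{I}-\frac{1}{N}\mathbf{1}_N\mathbf{1}_N^{\bf T}$, for all $k$. Using (\ref{definition_of_d(t)}) and the stated initialization one checks $d(-1)=\mathbf{0}$, hence $d(0)=c\,\Pi\{x(-1)-\alpha\nabla f(-1)\}\in\mathrm{range}(\Pi)$; every later increment in (\ref{matrix_algorithm_equivalent_form}) is premultiplied by $\mathbf{I}-W(t+1)\in\{\mathbf{0},\Pi\}$, and $\mathbf{d}^*=\Pi\mathbf{p}^*$ lies in $\mathrm{range}(\Pi)$ by Lemma \ref{fix_point_lemma2}. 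On $\mathrm{range}(\Pi)$ the pseudoinverse $\Pi^{\dagger}=\Pi$ acts as the identity, so $M=c^{-1}\Pi-\alpha\mathbf{I}$ reduces to the scalar $c^{-1}-\alpha>0$ (positivity from $0<c\le\frac{\mu}{2(\mu\alpha+4)}$, i.e.\ $c^{-1}\ge 2\alpha+\frac{8}{\mu}$), and the matrix inequality $M+(\frac{2}{\tau\mu\alpha}-1)\tau^3\beta L^2\alpha^3\mathbf{I}\preceq\rho_2 M_1$ collapses to the scalar requirement $(\frac{2}{\tau\mu\alpha}-1)\tau^3\beta L^2\alpha^3<(1-\tau\mu\alpha)\tau\alpha$. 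Dividing by $\tau\alpha$ and rearranging, this is exactly the first \textbf{while} condition $1-\tau\mu\alpha+\tau L^2(\tau\alpha-\frac{2}{\mu})\beta\alpha>0$; I would then set $\rho_2$ to the corresponding scalar ratio, which lies in $(0,1)$.

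Substituting $\rho_1,\rho_2$ into (\ref{theorem_equation1}) yields the first displayed inequality of the Corollary. For the geometric rate I would introduce the Lyapunov functional $V(k)=(1-\tau\mu\alpha)\Vert x(\tau k)-\mathbf{x}^*\Vert^2_{(\tau\alpha\mathbf{I})^{-1}}+\Vert d(\tau k)-\mathbf{d}^*\Vert^2_{M_1}$, so that this inequality reads $V(k+1)\le\rho\,V(k)$ with $\rho=\max\{\rho_1,\rho_2\}$, giving $V(k)\le\rho^k V(0)$ by induction. Using $M_1\succeq\mathbf{0}$ on $\mathrm{range}(\Pi)$ to discard the $d$-term on the left and $\Vert\cdot\Vert^2_{(\tau\alpha\mathbf{I})^{-1}}=\frac{1}{\tau\alpha}\Vert\cdot\Vert^2$, I get $\frac{1-\tau\mu\alpha}{\tau\alpha}\Vert x(\tau k)-\mathbf{x}^*\Vert^2\le\rho^k V(0)$, and expanding $V(0)$ delivers the stated bound. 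I expect the main obstacle to be the careful handling of the $d$-term: establishing the $\mathrm{range}(\Pi)$-invariance and the reduction of the pseudoinverse to a scalar before the matrix inequality can be matched to the scalar \textbf{while} condition, together with verifying that the Algorithm \ref{step_size_search} output indeed remains within the admissible range $0<\alpha\le 2\tau L$, $0<c\le\frac{\mu}{2(\mu\alpha+4)}$ required to invoke Theorem \ref{most_important}.
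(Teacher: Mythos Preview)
Your proposal is correct and follows essentially the same route as the paper: identify $\rho_1,\rho_2$ from the ratios in Theorem \ref{most_important}, reduce $\rho_i<1$ to the two \textbf{while} conditions of Algorithm \ref{step_size_search}, and conclude via the Lyapunov functional $V(k)$. The only differences in emphasis are that the paper devotes most of its proof to explicitly verifying that the initial $\alpha_0$ satisfies (\ref{convergence_inequality}) and that Algorithm \ref{step_size_search} terminates (which you only flag as an obstacle), while your $\mathrm{range}(\Pi)$-reduction for the $d$-term is more explicit than the paper's equivalent $\lambda_{\max}\{M\}$ formula for $\rho_2$ (the two coincide since $M$ acts as the scalar $c^{-1}-\alpha$ on $\mathrm{range}(\Pi)$).
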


\begin{proof} 
See in Appendix \ref{proof_corollary1}.
\end{proof}

\begin{remark}
From the derivation of Theorem \ref{most_important}, we know that the learning rate has to satisfy
\begin{equation}\label{convergence_inequality}
\left\{
\begin{aligned}
1-\tau\mu\alpha>& 1+ L\mu\tau^2\alpha^2+\frac{2\tau^3}{\mu}(1+\frac{2}{\tau})^{2\tau-2} L^4\alpha^3\\
&-2\tau\mu\alpha-\tau^4(1+\frac{2}{\tau})^{2\tau-2} L^4\alpha^4\\
1-\tau\mu\alpha>&(\frac{2}{\tau\mu\alpha}-1){\tau^2(1+\frac{2}{\tau})^{2\tau-2}} L^2\alpha^2,
\end{aligned}
\right.
\end{equation}
to guarantee the linear convergence of FedCET. The initial learning rate 
\begin{align*}
\alpha_0<\min\{\frac{1}{2\tau L},\frac{\mu^2}{2\tau(1+\frac{2}{\tau})^{2\tau-2}L^3},\frac{\mu}{5\tau(1+\frac{2}{\tau})^{2\tau-2} L^2}\}
\end{align*}
in Algorithm \ref{step_size_search} satisfies (\ref{convergence_inequality}) (see more details in the proof of Corollary \ref{corollary1}). The aim of Algorithm \ref{step_size_search} is to find the learning rate satisfying (\ref{convergence_inequality}) that is as large as possible to achieve fast convergence.  With a shorter search stepsize $h$ and hence a finer search granularity, a larger learning rate can be found by Algorithm \ref{step_size_search}. However, a shorter search stepsize $h$ also results in a higher computational burden.
\end{remark}

\begin{remark}
In FedCET, only one $n$-dimensional vector $$2x_i(t)-x_i(t-1)-\alpha\nabla f_i(x_i(t))+\alpha \nabla f_i(x_i(t-1))$$ is transmitted from the client $i\in\mathcal{S}$ to the parameter server, which then transmits only one $n$-dimensional vector $$\frac{1}{N}\sum^{N}_{i=1}\{ 2x_i(t)-x_i(t-1)-\alpha \nabla f_i(x_i(t))+\alpha \nabla f_i(x_i(t-1))\}$$ back to all clients. This is different from existing federated learning algorithms for heterogeneous data distributions such as SCAFFOLD \cite{pmlr-v119-karimireddy20a}, FedTrack \cite{A_Mitra_Aggregated}, and FedLin \cite{A_Mitra_Linear}, where two $n$-dimension variables have to be transmitted between the parameter sever and local clients in each communication round. 
\end{remark}

\section{Numerical Evaluations}

In this section, we use FedCET to solve an empirical risk minimization problem with the global loss function given by
\begin{align}\label{simulation_euqation}
\min_{x\in\mathbb{R}^n}f(x)=\frac{1}{N}\sum^{N}_{i=1}\{\frac{1}{n_i}\sum^{n_i}_{j=1}\Vert M_i x-b_{ij}\Vert^2+r_i\Vert x\Vert^2\}.
\end{align}
The above optimization problem occurs in many applications. For example, it can be used to formulate the distributed estimation problem \cite{wang_poor}: each client $i\in\mathcal{S}$ has $n_i$ noisy measurements of the parameter $b_{ij} = M_ix + w_{ij}$ for $j=\{1, 2,...,n_i\}$. $M_i\in\mathbb{R}^{n\times n}$ is the measurement matrix of the client $i\in\mathcal{S}$, $r_i$ is a nonnegative regularization parameter of client $i\in\mathcal{S}$, and $w_{ij}\in\mathbb{R}^{n}$ is the measurement noise associated with the measurement $b_{ij}\in\mathbb{R}^{n}$. The estimation of the parameter $x\in\mathbb{R}^{n}$ can be solved using the empirical risk minimization problem formulated in (\ref{simulation_euqation}). To facilitate calculation, we consider $M_i=\mathbf{I}_n$ and $r_i=1$ for any $i\in\mathcal{S}$. Then, the loss function becomes $f_i(x)=\frac{1}{n_i}\sum^{n_i}_{j=1}\Vert  x-b_{ij}\Vert^2+\Vert x\Vert^2$, which satisfies Assumption \ref{smooth_assumption} and Assumption \ref{strong_convex_assumption}.

We consider $10$ clients ($\mathcal{S}=\{1,2,\cdots,10\}$) with each client having $10$ noisy measurements ($n_i=10$). We set the dimension of the parameter $x$ as $60$ ($n=60$) and the number of local training step as $2$ ($\tau=2$). Each dimension of $b_{ij}$ is randomly generated from $[-10,10]$. We use FedCET, FedTrack \cite{A_Mitra_Aggregated}, and SCAFFOLD \cite{pmlr-v119-karimireddy20a} to solve this empirical risk minimization problem. The convergence error 
\begin{align*}
e(k)=\Vert \frac{1}{N}\sum^{N}_{i=1}x_i(k\tau)-x^*\Vert,
\end{align*}
where $k$ is the $k^{th}$ communication round, is used to measure the convergence performance. The learning rate of FedCET is obtained from Algorithm \ref{step_size_search} with the search stepsize $h=0.001\alpha_0$. The learning rate of FedTrack is selected as $\alpha=\frac{1}{18\tau L}$. In addition, the global and local learning rates of SCAFFOLD are selected as $\alpha_g=1$ and $\alpha_l=\frac{1}{81\tau L}$, respectively. The learning rates of FedTrack and SCAFFOLD are selected according to the learning-rate rules prescribed in \cite{A_Mitra_Aggregated} and \cite{pmlr-v119-karimireddy20a}. Precise gradients (full-batch) are utilized in all algorithms. The results are summarized in Fig \ref{fig1}. Clearly, the convergence of our proposed algorithm is faster than that of FedTrack and SCAFFOLD, while our algorithm only shares a half of messages that need to be shared in SCAFFOLD and FedTrack.

\begin{figure}[H]
\centerline{\includegraphics[width=\columnwidth]{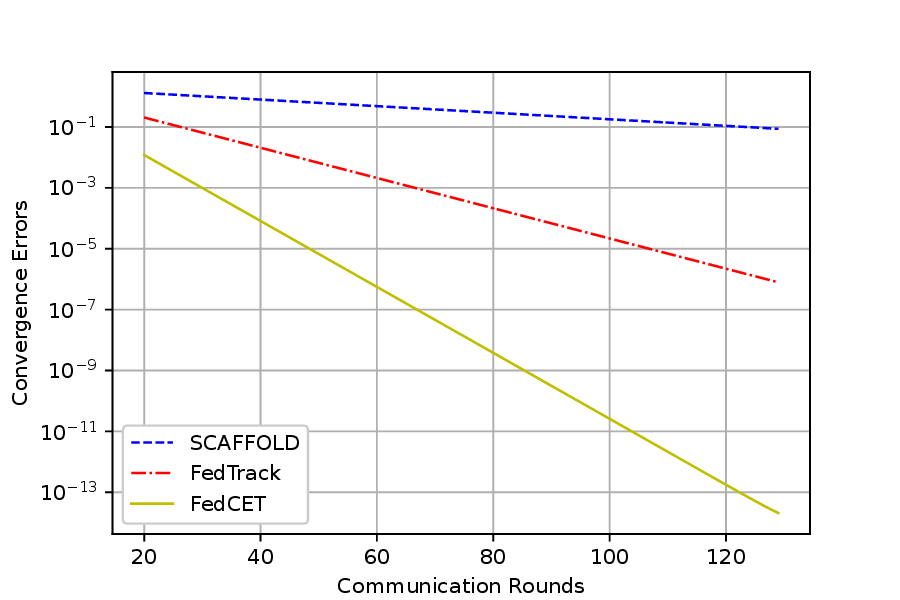}}
\caption{Comparisons of FedCET with FedTrack and SCAFFOLD.}
\label{fig1}
\end{figure}

\section{Conclusions}
In this paper, FedCET is proposed to solve federated learning problems under heterogeneous distributions. Sufficient conditions are provided for the learning rate to guarantee the linear convergence of FedCET to the exact optimal solution. Compared with existing counterpart algorithms for federated learning for heterogeneous data distributions which share two variables between the parameter server and clients in each communication round, our proposed algorithm only shares one variable in each communication round. Numerical results show that even with the reduced communication overhead, our algorithm can still achieve a faster convergence than existing counterpart algorithms.

\appendices

\section{Proof of Theorem \ref{most_important}}\label{most_important_proof}
From (\ref{iteration_matrix1}) and (\ref{iteration_matrix2}), we know that
\begin{align}
&\langle d(k\tau+\tau)-\mathbf{d}^*,x(\tau k+\tau)-\mathbf{x}^*\rangle\nonumber\\
= &\langle d(k\tau+\tau)-\mathbf{d}^*,x(\tau k)-\tau\alpha {\nabla} f(\tau k)\nonumber\\
&-\tau\alpha d(\tau k+\tau)+A(k)+B(k)-\mathbf{x}^*\rangle\nonumber\\
= &\langle d(k\tau+\tau)-\mathbf{d}^*,B(k)-\tau\alpha d(\tau k+\tau)+\tau\alpha d(\tau k)\rangle\nonumber\\
 &+\langle d(k\tau+\tau)-\mathbf{d}^*,x(\tau k)-\mathbf{x}^*\nonumber\\
 &-\tau\alpha {\nabla} f(\tau k)-\tau\alpha d(\tau k)+A(k)\rangle\nonumber\\
= &\langle d(k\tau+\tau)-\mathbf{d}^*,c^{-1}(\mathbf{I}-\frac{1}{N}\mathbf{1}_{N}\mathbf{1}^{\bf T}_{N})^{\dagger}\{d(\tau k+\tau)-d(\tau k)\}\rangle\nonumber\\
 &+\langle d(k\tau+\tau)-\mathbf{d}^*,\alpha d(\tau k)-\alpha d(\tau k+\tau)\rangle\nonumber\\
 =&\langle d(k\tau+\tau)-\mathbf{d}^*,d(\tau k+\tau)-d(\tau k)\rangle_M,\label{appendix_1_e1}
\end{align}
where $M=c^{-1}(\mathbf{I}-\frac{1}{N}\mathbf{1}_{N}\mathbf{1}^{\bf T}_{N})^{\dagger}-\alpha \mathbf{I}$. Then, from (\ref{iteration_matrix1}), (\ref{iteration_matrix2}), and (\ref{appendix_1_e1}), we have
\begin{align}
&\langle x(\tau k)-\mathbf{x}^*,{\nabla} f(\tau k)-{\nabla} f(\mathbf{x}^*) \rangle\nonumber\\
=&\langle x(\tau k)-\mathbf{x}^*,(\tau \alpha)^{-1}\{x(\tau k)-x(\tau k+\tau)\nonumber\\
&-\tau\alpha d(\tau k+\tau)+A(k)+B(k)\}-{\nabla} f(\mathbf{x}^*) \rangle\nonumber\\
=&\langle x(\tau k)-\mathbf{x}^*,(\tau \alpha)^{-1}\{x(\tau k)-x(\tau k+\tau)\nonumber\\
&+A(k)+B(k)\} \rangle+\langle x(\tau k+\tau)-\mathbf{x}^*,\mathbf{d}^*- d(\tau k+\tau) \rangle\nonumber\\
&+\langle x(\tau k)-x(\tau k+\tau),\mathbf{d}^*- d(\tau k+\tau) \rangle\nonumber\\
=&\langle x(\tau k)-\mathbf{x}^*,(\tau \alpha)^{-1}\{A(k)+B(k)\} \rangle\nonumber\\
&+\langle x(\tau k)-\mathbf{x}^*,(\tau \alpha)^{-1}\{x(\tau k)-x(\tau k+\tau)\} \rangle\nonumber\\
&+\langle x(\tau k)-x(\tau k+\tau),\mathbf{d}^*- d(\tau k+\tau) \rangle\nonumber\\
&-\langle d(k\tau+\tau)-\mathbf{d}^*,d(\tau k+\tau)-d(\tau k)\rangle_M\nonumber\\
=&\langle x(\tau k)-x(\tau k+\tau),(\tau \alpha)^{-1}\{x(\tau k)-\mathbf{x}^*\}\nonumber\\
&+\mathbf{d}^*- d(\tau k+\tau) \rangle+\langle x(\tau k)-\mathbf{x}^*,A(k)+B(k) \rangle_{(\tau\alpha\mathbf{I})^{-1}}\nonumber\\
&-\langle d(k\tau+\tau)-\mathbf{d}^*,d(\tau k+\tau)-d(\tau k)\rangle_M\nonumber\\
=&\langle x(\tau k)-x(\tau k+\tau),(\tau \alpha)^{-1}\{x(\tau k+\tau)+\tau\alpha {\nabla} f(\tau k)\nonumber\\
&+\tau\alpha d(\tau k+\tau)-A(k)-B(k)-\mathbf{x}^*\}+\mathbf{d}^*- d(\tau k+\tau) \rangle\nonumber\\
&+\langle x(\tau k)-\mathbf{x}^*,A(k)+B(k) \rangle_{(\tau\alpha\mathbf{I})^{-1}}\nonumber\\
&-\langle d(k\tau+\tau)-\mathbf{d}^*,d(\tau k+\tau)-d(\tau k)\rangle_M\nonumber\\
=&\langle x(\tau k)-x(\tau k+\tau),(\tau \alpha)^{-1}\{x(\tau k+\tau)+\tau\alpha {\nabla} f(\tau k)\nonumber\\
&-\mathbf{x}^*\}+\mathbf{d}^*\rangle-\langle x(\tau k)-x(\tau k+\tau),A(k)+B(k)\rangle_{(\tau\alpha\mathbf{I})^{-1}}\nonumber\\
&+\langle x(\tau k)-\mathbf{x}^*,A(k)+B(k) \rangle_{(\tau\alpha\mathbf{I})^{-1}}\nonumber\\
&-\langle d(k\tau+\tau)-\mathbf{d}^*,d(\tau k+\tau)-d(\tau k)\rangle_M\nonumber\\
=&\langle x(\tau k)-x(\tau k+\tau),x(\tau k+\tau)-\mathbf{x}^*\rangle_{(\tau\alpha\mathbf{I})^{-1}}\nonumber\\
&+\langle x(\tau k)-x(\tau k+\tau),{\nabla} f(\tau k)-{\nabla} f(\mathbf{x}^*)\rangle\nonumber\\
&+\langle x(\tau k+\tau)-\mathbf{x}^*,A(k)+B(k) \rangle_{(\tau\alpha\mathbf{I})^{-1}}\nonumber\\
&-\langle d(k\tau+\tau)-\mathbf{d}^*,d(\tau k+\tau)-d(\tau k)\rangle_M. \label{appendix_1_e2}
\end{align}
For  any symmetric matrix $Q\in\mathbb{R}^{N\times N}$, based on the definition of $\langle\cdot,\cdot\rangle_{Q}$, we have 
\begin{align}
\Vert a+b\Vert^2_{Q}=\Vert a\Vert^2_{Q}+\Vert b\Vert^2_{Q}+2\langle a,b\rangle_{Q},\label{square_inner_product}
\end{align}
where $a,b\in\mathbb{R}^{N\times n}$. Through using (\ref{square_inner_product}) to analyze the terms $\langle x(\tau k)-x(\tau k+\tau),x(\tau k+\tau)-\mathbf{x}^*\rangle_{(\tau\alpha\mathbf{I})^{-1}}$ and $\langle d(k\tau+\tau)-\mathbf{d}^*,d(\tau k+\tau)-d(\tau k)\rangle_M$ in (\ref{appendix_1_e2}), we can derive the following equation
\begin{align}
&2\langle x(\tau k)-\mathbf{x}^*,{\nabla} f(\tau k)-{\nabla} f(\mathbf{x}^*) \rangle\nonumber\\
&-2\langle x(\tau k)-x(\tau k+\tau),{\nabla} f(\tau k)-{\nabla} f(\mathbf{x}^*)\rangle\nonumber\\
=&2\langle x(\tau k)-x(\tau k+\tau),x(\tau k+\tau)-\mathbf{x}^*\rangle_{(\tau\alpha\mathbf{I})^{-1}}\nonumber\\
&+2\langle x(\tau k+\tau)-\mathbf{x}^*,A(k)+B(k) \rangle_{(\tau\alpha\mathbf{I})^{-1}}\nonumber\\
&+2\langle d(k\tau+\tau)-\mathbf{d}^*,d(\tau k)-d(\tau k+\tau)\rangle_M\nonumber\\
=&\Vert x(\tau k)-\mathbf{x}^*\Vert^2_{(\tau\alpha\mathbf{I})^{-1}}-\Vert x(\tau k)-x(\tau k+\tau)\Vert^2_{(\tau\alpha\mathbf{I})^{-1}}\nonumber\\
&-\Vert x(\tau k+\tau)-\mathbf{x}^*\Vert^2_{(\tau\alpha\mathbf{I})^{-1}}+\Vert d(\tau k)-\mathbf{d}^*\Vert^2_{M}\nonumber\\
&-\Vert d(k\tau+\tau)-\mathbf{d}^*\Vert^2_{M}-\Vert d(\tau k)-d(\tau k+\tau)\Vert^2_{M}\nonumber\\
&+2\langle x(\tau k+\tau)-\mathbf{x}^*,A(k)+B(k) \rangle_{(\tau\alpha\mathbf{I})^{-1}}.\label{appendix_1_e3}
\end{align}
Rearranging terms in (\ref{appendix_1_e3}), we can derive that
\begin{align}
&\Vert x(\tau k+\tau)-\mathbf{x}^*\Vert^2_{(\tau\alpha\mathbf{I})^{-1}}+\Vert d(k\tau+\tau)-\mathbf{d}^*\Vert^2_{M}\nonumber\\
\leq&\Vert x(\tau k)-\mathbf{x}^*\Vert^2_{(\tau\alpha\mathbf{I})^{-1}}-\Vert x(\tau k)-x(\tau k+\tau)\Vert^2_{(\tau\alpha\mathbf{I})^{-1}}\nonumber\\
&+\Vert d(\tau k)-\mathbf{d}^*\Vert^2_{M}-\Vert d(\tau k)-d(\tau k+\tau)\Vert^2_{M}\nonumber\\
&+2\langle x(\tau k)-x(\tau k+\tau),{\nabla} f(\tau k)-{\nabla} f(\mathbf{x}^*)\rangle\nonumber\\
&-2\langle x(\tau k)-\mathbf{x}^*,{\nabla} f(\tau k)-{\nabla} f(\mathbf{x}^*) \rangle\nonumber\\
&+2\langle x(\tau k+\tau)-\mathbf{x}^*,A(k)+B(k) \rangle_{(\tau\alpha\mathbf{I})^{-1}}.\label{second_appendix1}
\end{align}
Then, we need to analyze the properties of  $2\langle x(\tau k)-x(\tau k+\tau),{\nabla} f(\tau k)-{\nabla} f(\mathbf{x}^*)\rangle-2\langle x(\tau k)-\mathbf{x}^*,{\nabla} f(\tau k)-{\nabla} f(\mathbf{x}^*) \rangle$ of (\ref{second_appendix1}). From (\ref{fix_point_lemma1}), (\ref{iteration_matrix1}), and (\ref{iteration_matrix2}), we have
\begin{align}
&2\langle x(\tau k)-x(\tau k+\tau),{\nabla} f(\tau k)-{\nabla} f(\mathbf{x}^*)\rangle\nonumber\\
&-2\langle x(\tau k)-\mathbf{x}^*,{\nabla} f(\tau k)-{\nabla} f(\mathbf{x}^*) \rangle\nonumber\\
=&2\langle x(\tau k)-x(\tau k+\tau),(\tau \alpha)\{{\nabla} f(\tau k)-{\nabla}   f(\mathbf{x}^*)\}\rangle_{(\tau\alpha\mathbf{I})^{-1}}\nonumber\\
&-2\langle x(\tau k)-\mathbf{x}^*,{\nabla} f(\tau k)-{\nabla} f(\mathbf{x}^*) \rangle\nonumber\\
=&-2\langle x(\tau k)-x(\tau k+\tau),(\tau \alpha)\{{\nabla} f(\mathbf{x}^*)-{\nabla} f(\tau k)\}\rangle_{(\tau\alpha\mathbf{I})^{-1}}\nonumber\\
&-2\langle x(\tau k)-\mathbf{x}^*,{\nabla} f(\tau k)-{\nabla} f(\mathbf{x}^*) \rangle\nonumber\\
=&{\Vert}  x(\tau k)-x(\tau k+\tau){\Vert}^2_{(\tau\alpha\mathbf{I})^{-1}}-\Vert x(\tau k)-x(\tau k+\tau)\nonumber\\
&+(\tau \alpha)\{{\nabla} f(\mathbf{x}^*)-{\nabla} f(\tau k)\}{\Vert}^2_{(\tau\alpha\mathbf{I})^{-1}}\nonumber\\
&+{\Vert}  (\tau \alpha)\{{\nabla} f(\mathbf{x}^*)-{\nabla} f(\tau k)\}{\Vert}^2_{(\tau\alpha\mathbf{I})^{-1}}\nonumber\\
&-2\langle x(\tau k)-\mathbf{x}^*,{\nabla} f(\tau k)-{\nabla} f(\mathbf{x}^*) \rangle\nonumber\\
=&-\Vert \tau\alpha d(\tau k+\tau)-\tau \alpha \mathbf{d}^*-A(k)-B(k){\Vert}^2_{(\tau\alpha\mathbf{I})^{-1}}\nonumber\\
&+{\Vert}  x(\tau k)-x(\tau k+\tau){\Vert}^2_{(\tau\alpha\mathbf{I})^{-1}}+{\Vert} {\nabla} f(\mathbf{x}^*)-{\nabla} f(\tau k){\Vert}^2_{\tau\alpha\mathbf{I}}\nonumber\\
&-2\langle x(\tau k)-\mathbf{x}^*,{\nabla} f(\tau k)-{\nabla} f(\mathbf{x}^*) \rangle.\label{second_appendix2}
\end{align}
Taking (\ref{second_appendix2}) into (\ref{second_appendix1}), we have
\begin{align}
&\Vert x(\tau k+\tau)-\mathbf{x}^*\Vert^2_{(\tau\alpha\mathbf{I})^{-1}}+\Vert d(k\tau+\tau)-\mathbf{d}^*\Vert^2_{M}\nonumber\\
\leq&\Vert x(\tau k)-\mathbf{x}^*\Vert^2_{(\tau\alpha\mathbf{I})^{-1}}-\Vert x(\tau k)-x(\tau k+\tau)\Vert^2_{(\tau\alpha\mathbf{I})^{-1}}\nonumber\\
&+\Vert d(\tau k)-\mathbf{d}^*\Vert^2_{M}-\Vert d(\tau k)-d(\tau k+\tau)\Vert^2_{M}\nonumber\\
&+{\Vert}  x(\tau k)-x(\tau k+\tau){\Vert}^2_{(\tau\alpha\mathbf{I})^{-1}}+{\Vert}  {\nabla} f(\mathbf{x}^*)-{\nabla} f(\tau k){\Vert}^2_{\tau\alpha\mathbf{I}}\nonumber\\
&-2\langle x(\tau k)-\mathbf{x}^*,{\nabla} f(\tau k)-{\nabla} f(\mathbf{x}^*) \rangle\nonumber\\
&+2\langle x(\tau k+\tau)-\mathbf{x}^*,A(k)+B(k) \rangle_{(\tau\alpha\mathbf{I})^{-1}}\nonumber\\
&-\Vert \tau\alpha d(\tau k+\tau)-\tau \alpha \mathbf{d}^*-A(k)-B(k){\Vert}^2_{(\tau\alpha\mathbf{I})^{-1}}.\label{previous_theorem1}
\end{align}
As for (\ref{previous_theorem1}), we need to analyze ${\Vert} {\nabla} f(\mathbf{x}^*)-{\nabla} f(\tau k){\Vert}^2_{\tau \alpha\mathbf{I}}-2\langle x(\tau k)-\mathbf{x}^*,{\nabla} f(\tau k)-{\nabla} f(\mathbf{x}^*) \rangle$ and $2\langle x(\tau k+\tau)-\mathbf{x}^*,A(k)+B(k) \rangle_{(\tau\alpha\mathbf{I})^{-1}}-\Vert \tau\alpha d(\tau k+\tau)-\tau \alpha \mathbf{d}^*-A(k)-B(k){\Vert}^2_{(\tau\alpha\mathbf{I})^{-1}}$. Firstly, from Assumption \ref{smooth_assumption} and Assumption \ref{strong_convex_assumption}, we know that
\begin{align}
&{\Vert} {\nabla} f(\mathbf{x}^*)-{\nabla} f(\tau k){\Vert}^2_{\tau \alpha\mathbf{I}}\nonumber\\
&-2\langle x(\tau k)-\mathbf{x}^*,{\nabla} f(\tau k)-{\nabla} f(\mathbf{x}^*) \rangle\nonumber\\
\leq &{\Vert} {\nabla} f(\mathbf{x}^*)-{\nabla} f(\tau k){\Vert}^2_{(\tau \alpha\mathbf{I})}-(2-\tau\alpha L)\Vert  x(\tau k)-\mathbf{x}^*\Vert^2_{\mu \mathbf{I}}\nonumber\\
&-(\tau\alpha L){\Vert} {\nabla} f(\mathbf{x}^*)-{\nabla} f(\tau k){\Vert}^2_{(L\mathbf{I})^{-1}}\nonumber\\
\leq&-(2-\tau\alpha L)\Vert  x(\tau k)-\mathbf{x}^*\Vert^2_{\mu \mathbf{I}}.\label{corollary_proof_1}
\end{align}
Then, we will analyze $2\langle x(\tau k+\tau)-\mathbf{x}^*,A(k)+B(k) \rangle_{(\tau\alpha\mathbf{I})^{-1}}-\Vert \tau\alpha d(\tau k+\tau)-\tau \alpha \mathbf{d}^*-A(k)-B(k){\Vert}^2_{(\tau\alpha\mathbf{I})^{-1}}$. We have 
\begin{align}
&2\langle x(\tau k+\tau)-\mathbf{x}^*,A(k)+B(k) \rangle_{(\tau\alpha\mathbf{I})^{-1}}\nonumber\\
&-\Vert \tau\alpha d(\tau k+\tau)-\tau \alpha \mathbf{d}^*-A(k)-B(k){\Vert}^2_{(\tau\alpha\mathbf{I})^{-1}}\nonumber\\
=&2{\langle} x(\tau k+\tau)-\mathbf{x}^*,A(k)+B(k) {\rangle}_{(\tau\alpha\mathbf{I})^{-1}}\nonumber\\
&+2{\langle} \tau\alpha d(\tau k+\tau)-\tau \alpha \mathbf{d}^*,A(k)+B(k){\rangle}_{(\tau\alpha\mathbf{I})^{-1}}\nonumber\\
&-\Vert  d(\tau k+\tau)- \mathbf{d}^*{\Vert}^2_{\tau\alpha\mathbf{I}}-\Vert A(k)+B(k){\Vert}^2_{(\tau\alpha\mathbf{I})^{-1}}\nonumber\\
\leq&2{\langle} x(\tau k+\tau)-\mathbf{x}^*,A(k)+B(k) {\rangle}_{(\tau\alpha\mathbf{I})^{-1}}\nonumber\\
&+2{\langle} \tau\alpha d(\tau k+\tau)-\tau\alpha \mathbf{d}^*,A(k)+B(k){\rangle}_{(\tau\alpha\mathbf{I})^{-1}}\nonumber\\
&-\Vert  d(\tau k+\tau)-\mathbf{d}^*{\Vert}^2_{\tau\alpha\mathbf{I}}-\Vert A(k)+B(k){\Vert}^2_{(\tau\alpha\mathbf{I})^{-1}}\nonumber\\
\leq&c_1{\Vert} x(\tau k+\tau)-\mathbf{x}^*{\Vert}^2_{(\tau\alpha\mathbf{I})^{-1}}+\frac{1}{c_1}{\Vert} A(k)+B(k){\Vert}^2_{(\tau\alpha\mathbf{I})^{-1}}\nonumber\\
&+c_2{\Vert} d(\tau k+\tau)-\mathbf{d}^*{\Vert}^2_{\tau \alpha\mathbf{I}}+\frac{1}{c_2}{\Vert} A(k)+B(k){\Vert}^2_{(\tau\alpha\mathbf{I})^{-1}}\nonumber\\
&-\Vert d(\tau k+\tau)- \mathbf{d}^*{\Vert}^2_{\tau\alpha\mathbf{I}}-\Vert A(k)+B(k){\Vert}^2_{(\tau\alpha\mathbf{I})^{-1}}\nonumber\\
\leq& (\frac{1}{c_1}+\frac{1}{c_2}-1 )\Vert A(k)+B(k){\Vert}^2_{(\tau\alpha\mathbf{I})^{-1}}\nonumber\\
&+c_1{\Vert} x(\tau k+\tau)-\mathbf{x}^*{\Vert}^2_{(\tau\alpha\mathbf{I})^{-1}}\nonumber\\
&+(c_2-1){\Vert} d(\tau k+\tau)-\mathbf{d}^*{\Vert}^2_{\tau \alpha\mathbf{I}}.\label{corollary_proof_2}
\end{align}
As for the term $\Vert A(k)+B(k){\Vert}^2_{(\tau\alpha\mathbf{I})^{-1}}$, we have the following lemma.
\begin{lemma}\label{most_important2}
Thus, we know that
\begin{align*}
&\Vert A(k)+B(k){\Vert}^2_{(\tau\alpha\mathbf{I})^{-1}}\\
\leq&{2\alpha\tau}\Vert d(\tau k+\tau)-d(\tau k){\Vert}^2+B_1 L^4\alpha^3\Vert x(\tau k)- x^*\Vert^2\\
&+B_1L^2\alpha^3\Vert d(\tau k)- \mathbf{d}^*\Vert^2,
\end{align*}
where $B_1={\tau^3}(1+\frac{2}{\tau})^{2\tau-2}$.
\end{lemma}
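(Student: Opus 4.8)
The plan is to reduce the weighted norm to the ordinary Frobenius norm via $\Vert A(k)+B(k)\Vert^2_{(\tau\alpha\mathbf{I})^{-1}}=\frac{1}{\tau\alpha}\Vert A(k)+B(k)\Vert^2$ and then separate the two contributions with $\Vert A(k)+B(k)\Vert^2\le 2\Vert A(k)\Vert^2+2\Vert B(k)\Vert^2$. The $B(k)$ term is immediate: since $B(k)=(\tau-1)\alpha\{d(\tau k+\tau)-d(\tau k)\}$, we get $\frac{2}{\tau\alpha}\Vert B(k)\Vert^2=\frac{2(\tau-1)^2\alpha}{\tau}\Vert d(\tau k+\tau)-d(\tau k)\Vert^2\le 2\tau\alpha\Vert d(\tau k+\tau)-d(\tau k)\Vert^2$, which is exactly the first term of the claimed bound (using $(\tau-1)^2\le\tau^2$). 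Everything else reduces to controlling $\Vert A(k)\Vert$.

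For $A(k)$ I would first rewrite it as $A(k)=\alpha\sum_{h=1}^{\tau-1}\{\nabla f(\tau k)-\nabla f(\tau k+h)\}$ and invoke $L$-smoothness (Assumption \ref{smooth_assumption}) to obtain $\Vert A(k)\Vert\le \alpha L\sum_{h=1}^{\tau-1}g_h$, where $g_h:=\Vert x(\tau k)-x(\tau k+h)\Vert$ measures the local drift within the communication round. The core of the proof is a recursion for $g_h$. Using the intra-round identities $d(\tau k+s)=d(\tau k)$ and $x(\tau k+s+1)=x(\tau k+s)-\alpha\nabla f(\tau k+s)-\alpha d(\tau k)$ established inside the proof of Lemma \ref{matrix_iteration_lemma}, together with the fixed-point relation $\nabla f(\mathbf{x}^*)+\mathbf{d}^*=\mathbf{0}_{N\times n}$ from Lemma \ref{fix_point_lemma1}, the increment telescopes to $x(\tau k)-x(\tau k+s+1)=\{x(\tau k)-x(\tau k+s)\}+\alpha\{\nabla f(\tau k+s)-\nabla f(\mathbf{x}^*)\}+\alpha\{d(\tau k)-\mathbf{d}^*\}$. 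Smoothness together with the triangle inequality $\Vert x(\tau k+s)-\mathbf{x}^*\Vert\le\Vert x(\tau k)-\mathbf{x}^*\Vert+g_s$ then yields the linear recursion $g_{s+1}\le (1+\alpha L)g_s+\alpha L\Vert x(\tau k)-\mathbf{x}^*\Vert+\alpha\Vert d(\tau k)-\mathbf{d}^*\Vert$ with $g_0=0$.

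I would then unroll this recursion to $g_h\le \frac{(1+\alpha L)^h-1}{\alpha L}\{\alpha L\Vert x(\tau k)-\mathbf{x}^*\Vert+\alpha\Vert d(\tau k)-\mathbf{d}^*\Vert\}$ and control the geometric factor using the learning-rate bound, which ensures $\alpha L\le\frac{2}{\tau}$, together with the elementary estimate $(1+\alpha L)^h-1\le h\alpha L(1+\alpha L)^{h-1}\le h\alpha L(1+\frac{2}{\tau})^{\tau-2}$ valid for $h\le\tau-1$. This gives $g_h\le h\alpha(1+\frac{2}{\tau})^{\tau-2}\{L\Vert x(\tau k)-\mathbf{x}^*\Vert+\Vert d(\tau k)-\mathbf{d}^*\Vert\}$, and summing with $\sum_{h=1}^{\tau-1}h=\frac{\tau(\tau-1)}{2}$ produces $\Vert A(k)\Vert\le \frac{\alpha^2 L\tau(\tau-1)}{2}(1+\frac{2}{\tau})^{\tau-2}\{L\Vert x(\tau k)-\mathbf{x}^*\Vert+\Vert d(\tau k)-\mathbf{d}^*\Vert\}$. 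Squaring, multiplying by $\frac{2}{\tau\alpha}$, and applying $(a+b)^2\le 2a^2+2b^2$ delivers the two remaining terms once the resulting constant $\tau(\tau-1)^2(1+\frac{2}{\tau})^{2\tau-4}$ is bounded by $B_1=\tau^3(1+\frac{2}{\tau})^{2\tau-2}$ (note $(\tau-1)^2\le\tau^2$ and $(1+\frac{2}{\tau})^{2}\ge 1$ leave ample room).

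The hard part is isolating and solving the drift recursion correctly: one must verify that the $\mathbf{d}^*$ and $\nabla f(\mathbf{x}^*)$ terms cancel cleanly so that only $\Vert x(\tau k)-\mathbf{x}^*\Vert$ and $\Vert d(\tau k)-\mathbf{d}^*\Vert$ (rather than raw iterate norms) appear, and track the exponential growth so that the per-step factor $(1+\alpha L)$ telescopes into exactly $(1+\frac{2}{\tau})^{2\tau-2}$ and no larger. The remaining work — the two factor-of-two splittings and collapsing the polynomial prefactors into $B_1$ — is routine, and it is precisely the $\frac{1}{2}$ recovered from $\sum_{h=1}^{\tau-1}h=\frac{\tau(\tau-1)}{2}$ that provides enough slack to absorb those constants.
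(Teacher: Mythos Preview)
Your proposal is correct and follows essentially the same strategy as the paper: split $A(k)$ and $B(k)$, dispatch $B(k)$ trivially, and control $A(k)$ via $L$-smoothness, the intra-round identities from Lemma~\ref{matrix_iteration_lemma}, the fixed-point relation $\nabla f(\mathbf{x}^*)+\mathbf{d}^*=\mathbf{0}$, and the bound $\alpha L\le \frac{2}{\tau}$ to tame the geometric growth. The only tactical difference is that you run the recursion on the drift $g_h=\Vert x(\tau k+h)-x(\tau k)\Vert$ and then sum $\sum_h h$, whereas the paper runs it on $\Vert x(\tau k+s)-\mathbf{x}^*\Vert$ and bounds every term by the worst case $h=\tau-1$; both routes land comfortably inside $B_1$.
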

\begin{proof}
See Appendix \ref{most_important2_proof}.
\end{proof}
From (\ref{corollary_proof_1}), (\ref{corollary_proof_2}), and Lemma \ref{most_important2}, we know that
\begin{align}
&\Vert x(\tau k+\tau)-\mathbf{x}^*\Vert^2_{(\tau\alpha\mathbf{I})^{-1}}+\Vert d(k\tau+\tau)-\mathbf{d}^*\Vert^2_{M}\nonumber\\
\leq&\Vert x(\tau k)-\mathbf{x}^*\Vert^2_{(\tau\alpha\mathbf{I})^{-1}}+\Vert d(\tau k)-\mathbf{d}^*\Vert^2_{M}\nonumber\\
&-\Vert d(\tau k)-d(\tau k+\tau)\Vert^2_{M}\nonumber-(2-\tau\alpha L)\Vert  x(\tau k)-\mathbf{x}^*\Vert^2_{\mu \mathbf{I}}\nonumber\\
&+c_1{\Vert} x(\tau k+\tau)-\mathbf{x}^*{\Vert}^2_{(\tau\alpha\mathbf{I})^{-1}}\nonumber\\
&+(c_2-1){\Vert} d(\tau k+\tau)-\mathbf{d}^*{\Vert}^2_{\tau \alpha\mathbf{I}}\nonumber\\
&+(\frac{1}{c_1}+\frac{1}{c_2}-1)\Big\{{2\alpha\tau}\Vert d(\tau k+\tau)-d(\tau k){\Vert}^2\nonumber\\
&+B_1  L^4\alpha^3\Vert x(\tau k)- x^*\Vert^2+B_1 L^2\alpha^3\Vert d(\tau k)- \mathbf{d}^*\Vert^2\Big\}.\label{appendix_1_e4}
\end{align}
Rearranging terms in (\ref{appendix_1_e4}), we have
\begin{align*}
&{\Vert} x(\tau k+\tau)-\mathbf{x}^*{\Vert}^2_{(\tau\alpha\mathbf{I})^{-1}}+{\Vert} d(k\tau+\tau)-\mathbf{d}^*{\Vert}^2_{M}\nonumber\\
\leq&{\Vert} x(\tau k)-\mathbf{x}^*{\Vert}^2_{(\tau\alpha\mathbf{I})^{-1}}+{\Vert} d(\tau k)-\mathbf{d}^*{\Vert}^2_{M}\nonumber\\
&-{\Vert} d(\tau k)-d(\tau k+\tau){\Vert}^2_{M}\nonumber\\
&-(2-\tau\alpha L)\tau\mu\alpha{\Vert}  x(\tau k)-\mathbf{x}^*{\Vert}^2_{(\tau\alpha\mathbf{I})^{-1}}\nonumber\\
&+c_1{\Vert} x(\tau k+\tau)-\mathbf{x}^*{\Vert}^2_{(\tau\alpha\mathbf{I})^{-1}}\nonumber\\
&+(c_2-1){\Vert} d(\tau k+\tau)-\mathbf{d}^*{\Vert}^2_{\tau\alpha\mathbf{I}}\nonumber\\
&+(\frac{1}{c_1}+\frac{1}{c_2}-1){2\alpha\tau}{\Vert} d(\tau k+\tau)-d(\tau k){\Vert}^2\\
&+(\frac{1}{c_1}+\frac{1}{c_2}-1)B_1\tau \alpha^4 L^4{\Vert} x(\tau k)- x^*{\Vert}^2_{(\tau\alpha\mathbf{I})^{-1}}\\
&+(\frac{1}{c_1}+\frac{1}{c_2}-1)\frac{B_1}{\tau} L^2\alpha^2{\Vert} d(\tau k)- \mathbf{d}^*{\Vert}^2_{\tau\alpha\mathbf{I}}.
\end{align*}
Equivalently, we know that
\begin{align*}
&(1-c_1){\Vert} x(\tau k+\tau)-\mathbf{x}^*{\Vert}^2_{(\tau\alpha\mathbf{I})^{-1}}\\
&+{\Vert} d(k\tau+\tau)-\mathbf{d}^*{\Vert}^2_{M+(1-c_2)\tau\alpha \mathbf{I}}\\
\leq&{\{}1-(2-\tau\alpha L)\tau\mu\alpha\\
&+(\frac{1}{c_1}+\frac{1}{c_2}-1)B_1\tau \alpha^4 L^4 {\}}{\Vert} x(\tau k)-\mathbf{x}^*{\Vert}^2_{(\tau\alpha\mathbf{I})^{-1}}\\
&+{\Vert} d(\tau k)-\mathbf{d}^*{\Vert}^2_{M+(\frac{1}{c_1}+\frac{1}{c_2}-1){B_1} L^2\alpha^3 \mathbf{I}}\\
&-{\Vert} d(\tau k)-d(\tau k+\tau){\Vert}^2_{M}\nonumber\\
&+(\frac{1}{c_1}+\frac{1}{c_2}-1){2\alpha\tau}{\Vert} d(\tau k+\tau)-d(\tau k){\Vert}^2.
\end{align*}
We select $c_1=c_2=\tau\mu\alpha$ and then we know that
\begin{align}
&(1-\tau\mu\alpha){\Vert} x(\tau k+\tau)-\mathbf{x}^*{\Vert}^2_{(\tau\alpha\mathbf{I})^{-1}}\nonumber\\
&+{\Vert} d(k\tau+\tau)-\mathbf{d}^*{\Vert}^2_{M+(1-\tau\mu\alpha)\tau\alpha\mathbf{I}}\nonumber\\
\leq&{\Big\{}1-2\tau\mu\alpha+ L\mu\tau^2\alpha^2+\frac{2\tau^3}{\mu}(1+\frac{2}{\tau})^{2\tau-2} L^4\alpha^3\nonumber\\
&-\tau^4(1+\frac{2}{\tau})^{2\tau-2} L^4\alpha^4 {\Big\}}{\Vert} x(\tau k)-\mathbf{x}^*{\Vert}^2_{(\tau\alpha\mathbf{I})^{-1}}\nonumber\\
&+{\Vert} d(\tau k)-\mathbf{d}^*{\Vert}^2_{M+(\frac{2}{\tau\mu\alpha}-1){\tau^3(1+\frac{2}{\tau})^{2\tau-2}} L^2\alpha^3\mathbf{I}}\nonumber\\
&-{\Vert} d(\tau k)-d(\tau k+\tau){\Vert}^2_{M}\nonumber\\
&+(\frac{2}{\tau\mu\alpha}-1){2\alpha\tau}{\Vert} d(\tau k+\tau)-d(\tau k){\Vert}^2.\label{final_theorem1}
\end{align}

We need the following lemma from \cite{Ming_Yan1} to design appropriate $c$ satisfying  
\begin{align*}
&-{\Vert} d(\tau k)-d(\tau k+\tau){\Vert}^2_{M}\nonumber\\
&+(\frac{2}{\tau\mu\alpha}-1){2\alpha\tau}{\Vert} d(\tau k+\tau)-d(\tau k){\Vert}^2\leq 0.
\end{align*}

\begin{lemma}
Let $P=c^{-1}(\mathbf{I}-\frac{1}{N}\mathbf{1}_{N}\mathbf{1}^{\bf T}_{N})^{\dagger}-b\mathbf{I}$ with $\mathbf{I}\succeq cb(\mathbf{I}-\frac{1}{N}\mathbf{1}_{N}\mathbf{1}^{\bf T}_{N})\succeq \mathbf{0}_{N\times N}$. Then, $\Vert\cdot\Vert_{P}$ is a norm defined for \textbf{range}$(\mathbf{I}-\frac{1}{N}\mathbf{1}_{N}\mathbf{1}^{\bf T}_{N})$.
\end{lemma}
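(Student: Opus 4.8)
The plan is to reduce everything to a scalar computation on the subspace $\mathrm{range}(\Pi)$, where $\Pi:=\mathbf{I}-\frac{1}{N}\mathbf{1}_{N}\mathbf{1}^{\bf T}_{N}$. First I would record the two structural facts that make $P$ tractable: $\Pi$ is a symmetric idempotent matrix (the orthogonal projection onto $\mathrm{span}(\mathbf{1}_{N})^{\perp}$, with eigenvalue $1$ of multiplicity $N-1$ and eigenvalue $0$ along $\mathbf{1}_{N}$), so its Moore--Penrose pseudoinverse satisfies $\Pi^{\dagger}=\Pi$, and consequently $P=c^{-1}\Pi-b\mathbf{I}$ is symmetric. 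Symmetry guarantees that $\langle A,B\rangle_{P}={\bf tr}(A^{\bf T}PB)$ is a symmetric bilinear form, which is the only prerequisite for the Cauchy--Schwarz / triangle-inequality step at the end.

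Second, I would compute the action of $P$ on the subspace. For any $a\in\mathbb{R}^{N\times n}$ whose columns lie in $\mathrm{range}(\Pi)$ (equivalently $\Pi a=a$), we have $\Pi^{\dagger}a=\Pi a=a$, so $Pa=c^{-1}a-ba=(c^{-1}-b)a$ and hence $\Vert a\Vert_{P}^{2}={\bf tr}(a^{\bf T}Pa)=(c^{-1}-b)\Vert a\Vert^{2}$. Thus on $\mathrm{range}(\Pi)$ the weighted norm collapses to a positive multiple of the Frobenius norm, provided the scalar $c^{-1}-b$ is positive.

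Third, I would extract that positivity from the hypothesis. The eigenvalues of $\mathbf{I}-cb\Pi$ are $1-cb$ (multiplicity $N-1$) and $1$ (along $\mathbf{1}_{N}$), while those of $cb\Pi$ are $cb$ and $0$; hence $\mathbf{I}\succeq cb\Pi\succeq\mathbf{0}_{N\times N}$ is equivalent to $0\le cb\le 1$, which yields $c^{-1}-b\ge 0$. The genuinely norm-defining case is the strict one $cb<1$, i.e.\ $c^{-1}-b>0$ (the boundary value $cb=1$ would make $\Vert\cdot\Vert_{P}$ vanish identically on $\mathrm{range}(\Pi)$ and is ruled out by the strict choice of the weight parameter $c$ and the step size). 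With $c^{-1}-b>0$ in hand, the three norm axioms follow at once: nonnegativity and definiteness from $\Vert a\Vert_{P}^{2}=(c^{-1}-b)\Vert a\Vert^{2}$; absolute homogeneity from the quadratic scaling $\Vert\lambda a\Vert_{P}=|\lambda|\,\Vert a\Vert_{P}$; and the triangle inequality from the Cauchy--Schwarz inequality for the inner product $\langle\cdot,\cdot\rangle_{P}$ restricted to $\mathrm{range}(\Pi)$, where it is positive definite.

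The step I expect to be the main obstacle is establishing positive \emph{definiteness} rather than mere semidefiniteness on $\mathrm{range}(\Pi)$, and keeping clear that the restriction to this subspace is essential: on all of $\mathbb{R}^{N}$ the matrix $P$ is in fact indefinite, since $\mathbf{1}_{N}^{\bf T}P\mathbf{1}_{N}=-b\Vert\mathbf{1}_{N}\Vert^{2}<0$. This is precisely why the lemma only asserts that $\Vert\cdot\Vert_{P}$ is a norm on the subspace $\mathrm{range}(\Pi)$, and why the pseudoinverse---which annihilates the $\mathbf{1}_{N}$ direction---rather than an ordinary inverse appears in the definition of $P$.
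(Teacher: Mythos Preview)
Your argument is correct. The paper does not supply its own proof of this lemma; it is quoted from the NIDS reference and invoked as a black box in the proof of the main theorem to ensure that
\[
-{\Vert} d(\tau k)-d(\tau k+\tau){\Vert}^2_{M}+\Big(\tfrac{2}{\tau\mu\alpha}-1\Big)\,2\alpha\tau\,{\Vert} d(\tau k+\tau)-d(\tau k){\Vert}^2\le 0.
\]
Your route---recognizing that $\Pi=\mathbf{I}-\tfrac{1}{N}\mathbf{1}_{N}\mathbf{1}_{N}^{\bf T}$ is a symmetric idempotent, hence $\Pi^{\dagger}=\Pi$ and $P$ restricts to the scalar $c^{-1}-b$ on $\mathrm{range}(\Pi)$---is entirely sound and exploits the special averaging structure here; the cited reference states the result for a general symmetric mixing matrix $W$, where $\mathbf{I}-W$ is not idempotent and one must argue through its full spectrum rather than a single eigenvalue. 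Your caveat about the boundary case $cb=1$ (which would give only a seminorm) is also well taken; the paper's constraint $c\le\frac{\mu}{2\mu\alpha+8}$ keeps the inequality strict in its application.
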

We know that
 \begin{align*}
&-{\Vert} d(\tau k)-d(\tau k+\tau){\Vert}^2_{M}\\
&+(\frac{2}{\tau\mu\alpha}-1){2\alpha\tau}{\Vert} d(\tau k+\tau)-d(\tau k){\Vert}^2\\
=&-{\Vert} d(\tau k)-d(\tau k+\tau){\Vert}^2_{M-(\frac{2}{\tau\mu\alpha}-1){2\alpha\tau}\mathbf{I}}\\
=&-{\Vert} d(\tau k)-d(\tau k+\tau){\Vert}^2_{M-(\frac{4}{\mu}-2\alpha\tau) \mathbf{I}}
\end{align*}
Then, we know that
\begin{align*}
&M+2\alpha\tau \mathbf{I}-\frac{4}{\mu}\mathbf{I}\\
=&c^{-1}(\mathbf{I}-\frac{1}{N}\mathbf{1}_{N}\mathbf{1}^{\bf T}_{N})^{\dagger}-\alpha I+2\alpha\tau I-\frac{4}{\mu}\mathbf{I}\\
=&c^{-1}(\mathbf{I}-\frac{1}{N}\mathbf{1}_{N}\mathbf{1}^{\bf T}_{N})^{\dagger}-(\alpha+\frac{4}{\mu}-2\alpha\tau)\mathbf{I}.
\end{align*}
For $x\in$\textbf{range}$(I-\frac{1}{N}\mathbf{1}_{N}\mathbf{1}^{\bf T}_{N})$, $\Vert x\Vert_{M-(\frac{4}{\mu}-2\alpha\tau) \mathbf{I}}$ is the norm if 
\begin{align*}
\mathbf{I}-c\Big(\alpha+\frac{4}{\mu}-2\alpha\tau\Big)(\mathbf{I}-\frac{1}{N}\mathbf{1}_{N}\mathbf{1}^{\bf T}_{N})
\end{align*}
is semipositive. The sufficient condition to ensure the semipositive property of $\mathbf{I}-c(\alpha+\frac{4}{\mu}-2\alpha\tau)(\mathbf{I}-\frac{1}{N}\mathbf{1}_{N}\mathbf{1}^{\bf T}_{N})$ is
\begin{align*}
1-2c\Big(\alpha+\frac{4}{\mu}-2\alpha\tau\Big)\geq 0.
\end{align*}
Thus, we know that
\begin{align*}
0<c\leq\frac{\mu}{2\mu\alpha+8}.
\end{align*}
is a sufficient condition to ensure $\Vert x\Vert_{M-(\frac{4}{\mu}-2\alpha\tau) \mathbf{I}}$ being the norm for $x\in$\textbf{range}$(\mathbf{I}-\frac{1}{N}\mathbf{1}_{N}\mathbf{1}^{\bf T}_{N})$. From (\ref{iteration_matrix1}), we know that $d(\tau k)-d(\tau k+\tau)\in$\textbf{range}$(\mathbf{I}-\frac{1}{N}\mathbf{1}_{N}\mathbf{1}^{\bf T}_{N})$. Thus, we know that
\begin{align}
&-{\Vert} d(\tau k)-d(\tau k+\tau){\Vert}^2_{M}\nonumber\\
&+(\frac{2}{\tau\mu\alpha}-1){2\alpha\tau}{\Vert} d(\tau k+\tau)-d(\tau k){\Vert}^2\leq 0.\label{final_inequality}
\end{align}
if $0<c\leq\frac{\mu}{2\mu\alpha+8}$. Thus, from (\ref{final_theorem1}) and (\ref{final_inequality}), we know that
\begin{align*}
&(1-\tau\mu\alpha){\Vert} x(\tau k+\tau)-\mathbf{x}^*{\Vert}^2_{(\tau\alpha\mathbf{I})^{-1}}\\
&+{\Vert} d(k\tau+\tau)-\mathbf{d}^*{\Vert}^2_{M+(1-\tau\mu\alpha)\tau\alpha\mathbf{I}}\\
\leq&{\Big\{}1-2\tau\mu\alpha+ L\mu\tau^2\alpha^2+\frac{2\tau^3}{\mu}(1+\frac{2}{\tau})^{2\tau-2} L^4\alpha^3\\
&-\tau^4(1+\frac{2}{\tau})^{2\tau-2} L^4\alpha^4 {\Big\}}{\Vert} x(\tau k)-\mathbf{x}^*{\Vert}^2_{(\tau\alpha\mathbf{I})^{-1}}\\
&+{\Vert} d(\tau k)-\mathbf{d}^*{\Vert}^2_{M+(\frac{2}{\tau\mu\alpha}-1){\tau^3(1+\frac{2}{\tau})^{2\tau-2}} L^2\alpha^3\mathbf{I}}.
\end{align*}
The proof of Theorem \ref{most_important} is complete.

\section{Proof of Corollary \ref{corollary1}}\label{proof_corollary1}

From Theorem \ref{most_important}, we need to find the solution of $\alpha$ satisfying 
\begin{equation}\label{inequality3}
\left\{
\begin{aligned}
1-\tau\mu\alpha>&1-(2-\tau\alpha L)\tau\mu\alpha+\Big(\frac{2}{\tau\mu\alpha}-1\Big){\tau^2}B_2 \alpha^4 L^4\\
1-\tau\mu\alpha>&(\frac{2}{\tau\mu\alpha}-1)B_2 L^2\alpha^2,
\end{aligned}
\right.
\end{equation}
where $B_2={\tau^2}(1+\frac{2}{\tau})^{2\tau-2}$. Algorithm \ref{step_size_search} is designed to search the appropriate learning rate satisfying (\ref{inequality3}). To prove this point, we need to show that
\begin{itemize}
    \item [\textbf{(i)}] $\alpha<\min\{\frac{1}{2\tau L},\frac{\mu^2}{2\tau(1+\frac{2}{\tau})^{2\tau-2}L^3},\frac{\mu}{5\tau(1+\frac{2}{\tau})^{2\tau-2} L^2}\}$ is a solution of (\ref{inequality3}).
    \item [\textbf{(ii)}] Algorithm \ref{step_size_search} will be complete in finite steps, i.e., there exists $\alpha>\min\{\frac{1}{2\tau L},\frac{\mu^2}{2\tau(1+\frac{2}{\tau})^{2\tau-2}L^3},\frac{\mu}{5\tau(1+\frac{2}{\tau})^{2\tau-2} L^2}\}$ not satisfying (\ref{inequality3}).
\end{itemize}
We will prove \textbf{(i)} firstly. 
\begin{itemize}
\item As for the first inequality of (\ref{inequality3}), we have
\begin{align*}
\tau\mu\alpha<(2-\tau\alpha L)\tau\mu\alpha-\Big(\frac{2}{\tau\mu\alpha}-1\Big)B_2\tau^2\alpha^4L^4.
\end{align*}
The sufficient condition is
\begin{align*}
\tau\mu\alpha>\tau^2\mu L\alpha^2+\Big(\frac{2L^4{\tau^3}}{\mu}\Big)(1+\frac{2}{\tau})^{2\tau-2}\alpha^3.
\end{align*}
since $B_2={\tau^2}(1+\frac{2}{\tau})^{2\tau-2}$. If the learning rate $\alpha$ satisfies
\begin{align*}
\alpha<\frac{\mu^2}{2L^3\tau(1+\frac{2}{\tau})^{2\tau-2}},
\end{align*}
the sufficient condition of is $\tau\mu\alpha>2\tau^2L\mu\alpha^2$. Thus, if $\alpha$ satisfies 
\begin{align}\label{step_size_condition1}
\alpha<\min\Big\{\frac{1}{2\tau L},\frac{\mu^2}{2L^3\tau(1+\frac{2}{\tau})^{2\tau-2}}\Big\},
\end{align}
the first inequality of (\ref{inequality3}) is satisfied.
\item  As for the second  inequality of (\ref{inequality3}), we have
\begin{align*}
1-\tau\mu\alpha>(\frac{2}{\tau\mu\alpha}-1){\tau^2}(1+\frac{2}{\tau})^{2\tau-2} L^2 \alpha^2.
\end{align*}
Equivalently, we have
\begin{align*}
(\frac{2}{\tau\mu\alpha}-1){\tau^2}(1+\frac{2}{\tau})^{2\tau-2} L^2 \alpha^2+\tau\mu\alpha-2<-1.
\end{align*}
Equivalently, we have
\begin{align*}
(\frac{2-\tau\mu\alpha}{\mu}){\tau}(1+\frac{2}{\tau})^{2\tau-2} L^2 \alpha-(2-\tau\mu\alpha)<-1.
\end{align*}
Equivalently, we have
\begin{align*}
(2-\tau\mu\alpha)\Big\{\frac{\tau}{\mu}(1+\frac{2}{\tau})^{2\tau-2} L^2 \alpha-1\Big\}<-1.
\end{align*}
Equivalently, we have
\begin{align*}
(2-\tau\mu\alpha)\Big\{1-(\frac{\tau}{\mu})(1+\frac{2}{\tau})^{2\tau-2} L^2 \alpha\Big\}>1.
\end{align*}
Equivalently, we can find the sufficient condition
\begin{align*}
1-(\frac{\tau}{\mu})(1+\frac{2}{\tau})^{2\tau-2} L^2 \alpha&>0.8,\\
2-\tau\mu\alpha&>1.5.
\end{align*}
Thus, the sufficient condition for the second inequality of (\ref{inequality3}) is
\begin{align}\label{step_size_condition2}
\alpha<\min\Big\{\frac{1}{2\tau\mu},\frac{\mu}{5\tau(1+\frac{2}{\tau})^{2\tau-2} L^2}\Big\}
\end{align}

From (\ref{step_size_condition1}), (\ref{step_size_condition2}), and $\mu\leq L$, we know that the sufficient condition to the learning rate $\alpha$ to satisfy (\ref{inequality3}) is
\begin{align*}
\alpha<\min\{\frac{1}{2\tau L},\frac{\mu^2}{2\tau(1+\frac{2}{\tau})^{2\tau-2}L^3},\frac{\mu}{5\tau(1+\frac{2}{\tau})^{2\tau-2} L^2}\}.
\end{align*}
\end{itemize}
Thus, \textbf{(i)} is proved. Then, we will prove \textbf{(ii)}. If $\alpha=\frac{2}{\tau L}$, we know that the inequalities (\ref{inequality3}) is not satisfied. Thus, Algorithm \ref{step_size_search} will be complete in finite steps. Thus, combining with Theorem \ref{most_important}, there exist $0<\rho_1<1$ and $0<\rho_2<1$ such that
\begin{align*}
&(1-\tau\mu\alpha){\Vert} x(\tau k+\tau)-\mathbf{x}^*{\Vert}^2_{(\tau\alpha\mathbf{I})^{-1}}+{\Vert} d(k\tau+\tau)-\mathbf{d}^*{\Vert}^2_{M_1}\\
\leq&\rho_1(1-\tau\mu\alpha){\Vert} x(\tau k)-\mathbf{x}^*{\Vert}^2_{(\tau\alpha\mathbf{I})^{-1}}+\rho_2{\Vert} d(\tau k)-\mathbf{d}^*{\Vert}^2_{M_1}.
\end{align*}
where $M_1=M+(1-\tau\mu\alpha)\tau\alpha\mathbf{I}$,  
\begin{align*}
\rho_1=&\frac{1-(2-\tau\alpha L)\tau\mu\alpha+(\frac{2}{\tau\mu\alpha}-1)B_2\tau^2 \alpha^4 L^4 }{1-\tau\mu\alpha},\\
\rho_2=&\frac{\lambda_{max}\{M\}+(\frac{2}{\tau\mu\alpha}-1)B_2\alpha^2 L^2\tau\alpha}{\lambda_{max}\{M\}+(1-\tau\mu\alpha)\tau\alpha},
\end{align*}
and $\lambda_{max}\{M\}$ is the largest eigenvalue of $M$. We define $\rho=\max\{\rho_1,\rho_2\}$. Thus, we know that
\begin{align*}
&(1-\tau\mu\alpha){\Vert} x(\tau k)-\mathbf{x}^*{\Vert}^2_{(\tau\alpha\mathbf{I})^{-1}}+{\Vert} d(k\tau)-\mathbf{d}^*{\Vert}^2_{M_1}\\
\leq&\rho^{k}\Big\{(1-\tau\mu\alpha){\Vert} x(0)-\mathbf{x}^*{\Vert}^2_{(\tau\alpha\mathbf{I})^{-1}}+{\Vert} d(0)-\mathbf{d}^*{\Vert}^2_{M_1}\Big\}.
\end{align*}
Thus, we know that
\begin{align*}
{\Vert} x(\tau k)-\mathbf{x}^*{\Vert}^2\leq&\rho^k{\Vert} x(0)-\mathbf{x}^*{\Vert}^2+\frac{\rho^k\tau\alpha}{1-\tau\mu\alpha}{\Vert} d(0)-\mathbf{d}^*{\Vert}^2_{M_1}.
\end{align*}
The proof of Corollary \ref{corollary1} is complete.

\section{Proof of Lemma \ref{most_important2}}\label{most_important2_proof}
We have
\begin{align*}
&\Vert A(k)+B(k){\Vert}^2_{(\tau\alpha\mathbf{I})^{-1}}\\
=&\frac{1}{\tau\alpha}\Vert A(k)+B(k){\Vert}^2\\
\leq&\frac{2}{\tau\alpha}\Vert A(k){\Vert}^2+\frac{2}{\tau\alpha}\Vert B(k){\Vert}^2\\
\leq&\frac{2}{\tau\alpha}\Vert A(k){\Vert}^2+\frac{2\alpha(\tau-1)^2}{\tau}\Vert d(\tau k+\tau)-d(\tau k){\Vert}^2\\
\leq&\frac{2\alpha}{\tau}\Vert (\tau-1){\nabla} f(\tau k)-{\nabla} f(\tau k+1)-{\nabla} f(\tau k+2)\\
&-\cdots-{\nabla} f(\tau k+\tau-1){\Vert}^2\\
&+\frac{2\alpha(\tau-1)^2}{\tau}\Vert d(\tau k+\tau)-d(\tau k){\Vert}^2\\
\leq&\frac{2\alpha(\tau-1)^2}{\tau}\Vert d(\tau k+\tau)-d(\tau k){\Vert}^2\\
&+\frac{2(\tau-1)\alpha}{\tau}\{\Vert {\nabla} f(\tau k+1)-{\nabla} f(\tau k)\Vert^2\\
&+\Vert {\nabla} f(\tau k+2)-{\nabla} f(\tau k)\Vert^2\\
&+\cdots +\Vert {\nabla} f(\tau k+\tau-1)-{\nabla} f(\tau k)\Vert^2\}
\end{align*}
Then, we know that
\begin{align*}
&\Vert {\nabla} f(\tau k+1)-{\nabla} f(\tau k)\Vert\\
\leq&L\Vert  x(\tau k+1)- x(\tau k)\Vert\\
=&\alpha L\Vert {\nabla} f(\tau k)+ d(\tau k)\Vert\\
\leq&\alpha L\Vert {\nabla} f(\tau k)-{\nabla} f(\mathbf{x}^*)\Vert+\alpha L\Vert  d(\tau k)-\mathbf{d}^*\Vert\\
\leq&\alpha L^2\Vert x(\tau k)-\mathbf{x}^*\Vert+\alpha L\Vert  d(\tau k)-\mathbf{d}^*\Vert.
\end{align*}
Moreover, we have
\begin{align*}
&\Vert {\nabla} f(\tau k+2)-{\nabla} f(\tau k)\Vert\\
\leq&L\Vert  x(\tau k+2)- x(\tau k)\Vert\\
\leq& L\Vert x(\tau k+1)-x(\tau k)\Vert+L\Vert \alpha {\nabla} f(\tau k+1)+\alpha d(\tau k)\Vert\\
\leq& L\Vert x(\tau k+1)-x(\tau k)\Vert\\
&+L\Vert \alpha {\nabla} f(\tau k+1)-\alpha {\nabla} f(\mathbf{x}^*)\Vert+L\Vert\alpha d(\tau k)-\alpha \mathbf{d}^*\Vert\\
\leq& \alpha L^2\Vert x(\tau k)-\mathbf{x}^*\Vert+\alpha L\Vert  d(\tau k)-\mathbf{d}^*\Vert\\
&+\alpha L^2\Vert x(\tau k+1)- x^*\Vert+\alpha L\Vert  d(\tau k)- \mathbf{d}^*\Vert.
\end{align*}
From mathematical induction, we can derive that
\begin{align}
&\Vert {\nabla} f(\tau k+\tau -1)-{\nabla} f(\tau k)\Vert\leq\alpha L^2\{\Vert x(\tau k)-\mathbf{x}^*\Vert\nonumber\\
&+\Vert x(\tau k+1)- \mathbf{x}^*\Vert+\cdots+\Vert x(\tau k+\tau -2)- \mathbf{x}^*\Vert\}\nonumber\\
&+(\tau-1)\alpha L\Vert  d(\tau k)- \mathbf{d}^*\Vert.\label{MI_1}
\end{align}
Then, we consider
\begin{align*}
&\Vert x(\tau k+1)- \mathbf{x}^*\Vert\\
=&\Vert x(\tau k)-\alpha {\nabla} f(\tau k)-\alpha d(\tau k)- \mathbf{x}^*\Vert\\
\leq& \Vert x(\tau k)- \mathbf{x}^*\Vert+\Vert \alpha {\nabla} f(\tau k)+\alpha d(\tau k)\Vert\\
\leq& \Vert x(\tau k)- \mathbf{x}^*\Vert+\Vert \alpha {\nabla} f(\tau k)- \alpha{\nabla} f(\mathbf{x}^*)\Vert\\
&+\Vert  \alpha d(\tau k)-\alpha \mathbf{d}^*\Vert\\
\leq& (1+\alpha L)\Vert x(\tau k)- \mathbf{x}^*\Vert+\alpha\Vert d(\tau k)- \mathbf{d}^*\Vert.
\end{align*}
Moreover, we have
\begin{align*}
&\Vert x(\tau k+2)- x^*\Vert\\
=&\Vert x(\tau k+1)-\alpha {\nabla} f(\tau k+1)-\alpha d(\tau k)- x^*\Vert\\
\leq& \Vert x(\tau k+1)- x^*\Vert+\Vert \alpha {\nabla} f(\tau k+1)+\alpha d(\tau k)\Vert\\
\leq& \Vert x(\tau k+1)- x^*\Vert+\Vert \alpha {\nabla} f(\tau k+1)- \alpha {\nabla} f(\mathbf{x}^*)\Vert\\
&+\Vert  \alpha d(\tau k)-\alpha \mathbf{d}^*\Vert\\
\leq& (1+\alpha L)\Vert x(\tau k+1)- x^*\Vert+\alpha\Vert d(\tau k)- \mathbf{d}^*\Vert.
\end{align*}
From mathematical induction, we can derive that
\begin{align*}
&\Vert x(\tau k+\tau-2)- x^*\Vert\\
=&\Vert x(\tau k+\tau-3)-\alpha {\nabla} f(\tau k+\tau-3)-\alpha d(\tau k)- x^*\Vert\\
\leq& \Vert x(\tau k+\tau-3)- x^*\Vert+\Vert \alpha {\nabla} f(\tau k+\tau-3)- \alpha{\nabla} f(\mathbf{x}^*)\Vert\\
&+\Vert  \alpha d(\tau k)-\alpha \mathbf{d}^*\Vert\\
\leq& (1+\alpha L)\Vert x(\tau k+\tau-3)- x^*\Vert+\alpha\Vert d(\tau k)- \mathbf{d}^*\Vert.
\end{align*}
We know that $\alpha L<\frac{2}{\tau}$. Thus, we know that
\begin{align*}
\Vert x(\tau k+1)- x^*\Vert\leq &(1+\frac{2}{\tau})\Vert x(\tau k)- x^*\Vert\\
&+\alpha\Vert d(\tau k)- \mathbf{d}^*\Vert.
\end{align*}
We know that
\begin{align*}
&\Vert x(\tau k+2)- x^*\Vert\nonumber\\
\leq& (1+\frac{2}{\tau})\Vert x(\tau k+1)- x^*\Vert+\alpha\Vert d(\tau k)- \mathbf{d}^*\Vert\nonumber\\
\leq& (1+\frac{2}{\tau})\Big((1+\frac{2}{\tau})\Vert x(\tau k)- x^*\Vert+\alpha\Vert d(\tau k)- \mathbf{d}^*\Vert\Big)\nonumber\\
&+\alpha\Vert d(\tau k)- \mathbf{d}^*\Vert\nonumber\\
\leq& (1+\frac{2}{\tau})^2\Vert x(\tau k)- x^*\Vert+\Big((1+\frac{2}{\tau})\alpha+\alpha\Big)\Vert d(\tau k)- \mathbf{d}^*\Vert.
\end{align*}
Similarly, from mathematical induction, we can derive that
\begin{align}
\Vert x(\tau k&+\tau-2)- x^*\Vert\leq (1+\frac{2}{\tau})^{\tau-2}\Vert x(\tau k)- x^*\Vert\nonumber\\
&+\Big\{\frac{\tau}{2}\Big(1+\frac{2}{\tau}\Big)^{\tau-2}-\frac{\tau}{2}\Big\}\alpha\Vert d(\tau k)- \mathbf{d}^*\Vert.\label{MI_2}
\end{align}
Thus, from (\ref{MI_1}) and (\ref{MI_2}), we know that
\begin{align*}
&\Vert {\nabla} f(\tau k+\tau -1)-{\nabla} f(\tau k)\Vert\\
\leq& (\tau-1)\alpha L\Vert  d(\tau k)-\mathbf{d}^*\Vert+\alpha L^2\Vert x(\tau k)-\mathbf{x}^*\Vert\\
&+\alpha L^2\Vert x(\tau k+1)- x^*\Vert+\alpha L^2\Vert x(\tau k+2)- x^*\Vert\\
&\cdots\cdots\\
&+\alpha L^2\Vert x(\tau k+\tau -2)- x^*\Vert\\
\leq& (\tau-1)\alpha L\Vert  d(\tau k)-\mathbf{d}^*\Vert\\
&+\alpha L^2\Big\{1+(1+\frac{2}{\tau})+(1+\frac{2}{\tau})^2+\cdots\\
&+(1+\frac{2}{\tau})^{\tau-2}\Big\}\Vert x(\tau k)- x^*\Vert\\
&+\alpha L\Big\{1+(1+\frac{2}{\tau})+(1+\frac{2}{\tau})^2+\cdots\\
&+(1+\frac{2}{\tau})^{\tau-2}-(\tau-1)\Big\}\Vert d(\tau k)- \mathbf{d}^*\Vert\\
\leq&\frac{\alpha L\tau}{2}(1+\frac{2}{\tau})^{\tau-1}\Big\{L\Vert x(\tau k)- x^*\Vert+\Vert d(\tau k)- \mathbf{d}^*\Vert\Big\}.
\end{align*}
Moreover, we have
\begin{align*}
&\Vert {\nabla} f(\tau k+\tau -1)-{\nabla} f(\tau k)\Vert^2\\
\leq& A_1 L^4 \alpha^2\Vert x(\tau k)- x^*\Vert^2+A_1L^2\alpha^2 \Vert d(\tau k)- \mathbf{d}^*\Vert^2,
\end{align*}
where $A_1=\frac{\tau^2}{2}(1+\frac{2}{\tau})^{2\tau-2}$. Then, we have
\begin{align*}
&\Vert A(k)+B(k){\Vert}^2_{(\tau\alpha\mathbf{I})^{-1}}\\
\leq&\frac{2\alpha(\tau-1)^2}{\tau}\Vert d(\tau k+\tau)-d(\tau k){\Vert}^2\\
&+\frac{2(\tau-1)\alpha}{\tau}\Big\{\Vert {\nabla} f(\tau k+1)-{\nabla} f(\tau k)\Vert^2\\
&+\Vert {\nabla} f(\tau k+2)-{\nabla} f(\tau k)\Vert^2\\
&+\cdots +\Vert {\nabla} f(\tau k+\tau-1)-{\nabla} f(\tau k)\Vert^2\Big\}\\
\leq&\frac{2\alpha(\tau-1)^2}{\tau}\Vert d(\tau k+\tau)-d(\tau k){\Vert}^2\\
&+\frac{2(\tau-1)^2\alpha}{\tau}\Big\{A_1 L^4 \alpha^2\Vert x(\tau k)- x^*\Vert^2\\
&+A_1L^2\alpha^2 \Vert d(\tau k)- \mathbf{d}^*\Vert^2 \Big\}\\
\leq&{2\alpha\tau}\Vert d(\tau k+\tau)-d(\tau k){\Vert}^2+2A_1\tau  L^4\alpha^3\Vert x(\tau k)- x^*\Vert^2\\
&+2\tau A_1L^2\alpha^3\Vert d(\tau k)- \mathbf{d}^*\Vert^2.
\end{align*}
The proof of Lemma \ref{most_important2} is complete.



%

\bibliographystyle{ieeetr}
\bibliography{reference}

\end{document}